\documentclass[12pt]{article}
\usepackage{filecontents}
\RequirePackage{amsmath, amsfonts, amssymb, amsthm, bm, graphicx, mathtools, enumerate,multirow}

\usepackage{url}
\usepackage[utf8]{inputenc} 
\usepackage[T1]{fontenc}    
\usepackage{url}            
\usepackage{booktabs}       
\usepackage{amsfonts}       
\usepackage{nicefrac}       
\usepackage{microtype}      
\usepackage[shortlabels]{enumitem}


\usepackage{amsthm,amsmath,amssymb,bbm,bm}
\usepackage{cancel}
\usepackage{natbib}
\usepackage{longtable}
\usepackage{multirow}
\usepackage{setspace}
\usepackage{centernot}
\usepackage{array}
\usepackage{algorithmic}
\usepackage[linesnumbered, ruled, vlined]{algorithm2e}
\usepackage{mathrsfs}
\usepackage{dsfont}
\usepackage{relsize}
\usepackage{rotating}
\usepackage{enumitem}
\usepackage{float}
\usepackage{subcaption}
\usepackage{multirow}
\usepackage{wrapfig}
\usepackage{graphicx}
\usepackage{comment}
\usepackage{subcaption}
\usepackage{xcolor}

\DeclareMathOperator*{\argmax}{arg\,max}
\DeclareMathOperator*{\argmin}{arg\,min}

\theoremstyle{definition}
\newtheorem{assumption}{Assumption}

\newtheorem{theorem}{Theorem}[section]
\newtheorem{lemma}{Lemma}[section] 

\newtheorem{corollary}{Corollary}[section]

\newcommand{\QZL}[1]{{\bf {\color{red}\bf [ZL: #1]}}}
\newcommand{\rev}[1]{{\textcolor{black}{#1}}}
\newcommand{\revv}[1]{{\textcolor{black}{#1}}}

\newcommand{\indep}{\rotatebox[origin=c]{90}{$\models$}}
\newcommand{\dep}{\cancel{\rotatebox[origin=c]{90}{$\models$}}}




\newcommand{\EE}{{\mathbb{E}}}









\newcommand{\Vpi}{V^{\pi}}

\newcommand{\qnu}{q^{\nu}}

\newcommand{\jiayi}[1]{{\color{blue}\bf [Jiayi: #1]}}

\newcommand{\super}[1]{\textsc{Super}}
\newcommand{\oonly}[1]{\textsc{Oonly}}
\newcommand{\sonly}[1]{\textsc{Sonly}}
\newcommand{\szonly}[1]{\textsc{SZonly}}
\newcommand{\common}[1]{\textsc{common}}


\newcommand{\ind}{\mbox{$\perp\!\!\!\perp$}}

\newcommand{\pushright}[1]{\ifmeasuring@#1\else\omit\hfill$\displaystyle#1$\fi\ignorespaces}

\newcommand{\abs}[1]{|#1|}




\def\argmax{\operatorname{argmax}}

\newcommand{\samfixed}[1]{}

\def\ds1{{\mathrm{1 \hspace{-2.6pt} I}}}

\def\calA{{\mathcal A}}

\def\calD{{\mathcal D}}

\def\calG{{\mathcal G}}

\def\calM{{\mathcal M}}

\def\calP{{\mathcal P}}
\def\calQ{{\mathcal Q}}
\def\calR{{\mathcal R}}

\def\calS{{\mathcal S}}

\def\calU{{\mathcal U}}
\def\calV{{\mathcal V}}
\def\calW{{\mathcal W}}
\def\calX{{\mathcal X}}

\def\calZ{{\mathcal Z}}



\newcommand{\blind}{1}

\addtolength{\oddsidemargin}{-.5in}%
\addtolength{\evensidemargin}{-1in}%
\addtolength{\textwidth}{1in}%
\addtolength{\textheight}{1.7in}%
\addtolength{\topmargin}{-1in}%




\begin{document}

\def\spacingset#1{\renewcommand{\baselinestretch}%
{#1}\small\normalsize} \spacingset{1}


\if1\blind
{
  \title{\bf Blessing from Human-AI Interaction: Super \rev{Policy} Learning in Confounded Environments}
  \author{Jiayi Wang\\
    Department of Mathematical Sciences, University of Texas at Dallas\\
    and \\
    Chengchun Shi\\
    Department of Statistics, London School of Economics and Political Science\\
    and \\
    Zhengling Qi \\
    Department of Decision Sciences, George Washington University
    }
  \maketitle
} \fi

\if0\blind
{
  \bigskip
  \bigskip
  \bigskip
  \begin{center}
    {\LARGE\bf Blessing from Human-AI Interaction: Super  \rev{Policy} Learning in Confounded Environments}
\end{center}
  \medskip
} \fi

\bigskip
\begin{abstract}
  As AI becomes more prevalent throughout society, effective methods of integrating humans and AI systems that leverage their respective strengths and mitigate risk have become an important priority. In this paper, we introduce the paradigm of \textit{super \rev{policy} learning} that takes advantage of Human-AI interaction for data driven sequential decision making. This approach utilizes the observed action, either from AI or humans, as input for achieving a stronger oracle in policy learning for the decision maker (humans or AI). In the decision process with unmeasured confounding, the actions  taken by past agents can offer valuable insights into undisclosed information. By including this information for the policy search in a novel and legitimate manner, the proposed super \rev{policy} learning will yield a \textit{super-policy} that is guaranteed to outperform both the standard optimal policy and the behavior one (e.g., past agents' actions). We call this stronger oracle \textit{a blessing from human-AI interaction}. Furthermore, to address the issue of unmeasured confounding in finding super-policies using the batch data, a number of nonparametric and causal identifications are established \rev{under the framework of proximal causal inference.} Building upon on these novel identification results, we develop several super-policy learning algorithms and 
  systematically study their theoretical properties such as finite-sample regret guarantee. Finally, we illustrate the effectiveness of our proposal through extensive simulations and real-world applications.
\end{abstract}

\noindent%

\spacingset{1.7} 

\abovedisplayskip=3pt
\abovedisplayshortskip=3pt
\belowdisplayskip=3pt
\belowdisplayshortskip=3pt

\section{Introduction}
In recent years, AI has become increasingly important in solving complex tasks throughout society. While in many applications it is crucial to have fully autonomous systems that involve little or even no human interaction, in high-stake domains ranging from autonomous driving \cite{levine2020offline}, medical studies \citep{kosorok2019precision} to algorithmic trading \citep{liu2022finrl}, integrating AI systems and human knowledge is arguably the most effective for better decision making. Motivated by this, we study offline reinforcement learning under unmeasured confounding, where human-AI interaction can be naturally incorporated for better decision making.

Offline reinforcement learning (RL) aims to find a sequence of optimal policies by leveraging the batch data collected from past agents \citep{sutton2018reinforcement,levine2020offline}. In contrast with online learning, where agents can interact with environment through trial and error, offline RL must rely entirely on the pre-collected observational or experimental data and the agents have no control of the data generating process. More importantly, the possible existence of unobserved variables/confounders in the offline setting posits a significant challenge that may hinder an agent from learning an optimal policy. Despite these challenges, we observe that due to the unmeasured confounding, the behavior policy used to generate the data may reveal additional valuable information that is not recorded in the observed variables. In this paper, we propose a paradigm of super \rev{policy} learning by correctly incorporating the observed actions in the offline data for policy search, which is guaranteed to outperform the existing decision making methods. The proposed approach offers a unique opportunity for the human-AI interaction that leads to a better decision making.



\subsection{Motivating Examples}

\textbf{Machine in the Human Loop.} 
Since the middle of last century, the emergence of AI has had a profound impact on business operations,  particularly in the field of financial trading. 
AI algorithms are heavily used to discover market patterns and recommend trading strategies for maximizing profits \citep{johnson2010algorithmic, chan2021quantitative}. Compared with human traders, these algorithms are highly effective in analyzing large amounts of observational data, including historical and real-time financial, social/news and economic data, to make complex decisions. 
\rev{While AI is extremely powerful, 
there are also risks associated with relying solely on AI for decision-making \citep{saltapidas2018financial} and  
AI results can be quite unstable due to such a large size of features in its training process.  A slight change of a variable or a different choice of machine learning algorithms can result in a significant impact on the performance \citep{treleaven2013algorithmic}.  
On the other hand, human agents still play a fundamental role because they possess a unique ability to understand context, recognize patterns and make judgments based on their experience and knowledge. However, traditional decision making strategies from human agents may not attain optimality, as they lack the ability to extract all useful information manually. By integrating AI recommendations, human agents gain the capacity to assimilate machine-provided insights gleaned from the analysis of extensive offline data generated by AI algorithms, enabling them to make more informed and better decisions.}

\medskip

\noindent \textbf{Human in the Machine loop.} 
In many other applications, there is a common belief that human decision-makers have access to important information when taking an action \citep{kleinberg2018human}. For example, in the urgent care, clinicians leverage
visual observations or communications with patients to recommend treatments, where such
unstructured information is hard to quantify and often not recorded \citep{mcdonald1996medical}. 
In autonomous
driving, measurements collected by sensors are often noisy, causing partial observability that prevents autonomous agents from learning optimal actions. 
Hence it is commonly advocated that self-driving cars should be overseen by human drivers to serve as important safeguards against unseen dangers
\citep{nyholm2020automated}. 
Take the deep brain stimulation \citep[DBS][]{lozano2019deep} as a concrete example. Due to recent advances in DBS technology, it becomes feasible to instantly collect electroencephalogram data, based on which we are able to provide adaptive stimulation to specific regions in the brain so as to treat patients with neurological disorders including Parkinson's disease, essential tremor, etc. In this application, the patient is allowed to determine the behavior policy (e.g., when to turn on/off the stimulation, for how long, etc) based on the information only known to herself (e.g., how she feels), generating batch data with unmeasured confounders. Even though the human's decision may not be the optimal to herself due to her inability to objectively analyze the whole body environment,  it reflects her mental and physical information  that is difficult to record in the data. On the other hand, a machine decision maker can make the full use of the electroencephalogram data. By including patient's action in the learning process, the machine's recommendations for the treatment can be potentially improved.

\medskip

\noindent \textbf{Summary}. To summarize, in many applications, 
intermediate decisions given by human or machine (e.g., either an AI algorithm in stock trading or a patient during DBS therapy), naturally provide additional information  for achieving a stronger oracle in policy learning, compared with methods only based on observed covariates information.
This is indeed what we call ``a blessing from Human-AI Interaction" in the data-driven decision making. 

\subsection{Contribution: Super-policy Learning} 
Our contributions can be summarized in four-fold.   First, we introduce a novel decision-making paradigm in the confounded environment called super RL. Compared with the standard RL, super RL additionally takes the behavior agent's recommendations 
as input for learning an optimal policy, which is guaranteed to achieve a stronger oracle. In the confounded environment, 
super RL can embrace the blessing from behavior policies given by either AI or human.  In other words, it leverages the expertise of behavior agents in discovering unobserved information 
for enhanced policy learning for the current decision maker. The resulting policy, which we call \textit{super-policy}, is guaranteed to outperform the standard optimal one in the existing literature. 
To implement the proposed super-policy in the future, we require the behavior agent to recommend an action at each time, a common practice in certain applications, as discussed in our motivating examples. 
Second, to address the challenge of unmeasured confounding that hinders us from learning a super-policy using the offline data, we establish several novel non-parametric and causal identification results in various confounded environments for learning super policies. It is significantly challenging for identifying the super-policy in the sequential setting as we need to include two sets of past actions for policy search, i.e., those generated by the behavior policy and by the super-policy.  
Moreover, based on these identification results, we develop several super RL algorithms and derive the corresponding finite-sample regret guarantees. Finally, numerous simulation studies and real-world applications are conducted to illustrate the superior performance of our methods.


\subsection{Related Work}


There are a growing body of literature delving into the realm of human-AI interaction within the context of reinforcement learning. Various perspectives have been taken to incorporate human's knowledge in the learning process. 
Among these, the most common strategy is known as ``reward shaping". This involves tailoring the reward function of reinforcement learning through human feedback, with the aim of enhancing the agent's behavior \citep[e.g.][]{tenorio2010dynamic,warnell2018deep}. 
Another line of research uses human knowledge to adjust the policy. For instance, \cite{griffith2013policy} introduces a Bayesian approach that employs human feedback as a policy label to refine policy shaping; \cite{fachantidis2017learning} utilizes human knowledge to guide the exploration process of agents, while \cite{brys2015reinforcement} combines the value function generated by the agent with the one derived from human feedback to amplify the learning process; Moreover, \cite{saunders2017trial} puts forth the concept of safe RL through human intervention, wherein human intervention serves to override unfavorable actions recommended by the intelligent agent.
Different from the aforementioned approaches, our proposed ``super RL" takes a unique perspective.  We aim to leverage either the human or AI expertise in the previously collected data for helping the other side make better decisions. %

Our work is also closely related to off-policy evaluation (OPE) and learning under unmeasured confounding
in the sequential decision making problem \citep[Section 6.2]{uehara2022review}. Specifically, \cite{zhang2016markov} 
introduced the causal RL framework and the confounded Markov decision process (MDP) 
with memoryless unmeasured confounding, under which the Markov property holds in the observed data. Along this direction, many OPE and learning methods are 
proposed using instrumental or mediator variables \citep{chen2021estimating,liao2021instrumental,li2021causal,wang2021provably,shi2022off,Fu2022,yu2022strategic,xu2023instrumental}. 
In addition, partial identification bounds for the off-policy's value have been established based on sensitivity analysis \citep{namkoong2020off,kallus2020confounding,bruns2021model}. Another streamline of research focuses on general confounded POMDP models that allow time-varying unmeasured confounders to affect future rewards and transitions. 
Several point identification results were established 
\citep{tennenholtz2020off,bennett2021proximal,nair2021spectral,shi2021minimax,ying2021proximal,Miao2022} in this setting. However, none of the aforementioned works study policy learning with the help of the behavior agent's expertise, i.e., taking recommended action in the observed data for decision making.
Different from these works, we tackle the policy learning problem from a unique perspective and propose a novel super RL framework by leveraging the behavior agent's  expertise in discovering certain unobserved information to further improve decision making. 
\rev{Our identification results are developed by extending the idea of proximal causal inference \citep{tchetgen2020introduction}.}
We also rigorously establish the super-optimality of the proposed super-policy over the standard optimal policy and the behavior policy.
Our paper is also related to a line of works on policy learning and evaluation with partial observability using spectral decomposition and predictive state representation related methods \citep[see e.g.,][]{littman2001predictive,song2010hilbert,boots2011closing,hsu2012spectral,singh2012predictive,anandkumar2014tensor,jin2020sample,cai2022sample,uehara2022provably, lu2022pessimism,uehara2022future}. Nonetheless, these methods require the no-unmeasured-confounders assumption.

Finally, our proposal is motivated by the work of \cite{stensrud2022optimal} 
that introduced the concept of superoptimal treatment regime in contextual bandits. 
\rev{They used an instrumental variable approach for discovering such regime.   However, their method can only be applied in a restrictive single-stage decision making setting with binary actions. 
 Different from their approach for the identification, we adopt the framework of proximal causal inference. 
More importantly, our super-RL framework is generally applicable to both confounded contextual bandits and sequential decision making and can allow arbitrarily many but finite actions. }

\section{Super \rev{Policy} Learning
}

\subsection{Super Policy for Contextual Bandits}
\label{sec:contextual}
In this section,  we introduce the idea of super RL 
in the confounded contextual bandits (e.g., single-stage decision making under endogeneity). 
Consider a random tuple $(S, U, A, \{R(a)\}_{a \in \calA})$, where $S$ and $U$ denote the observed and unobserved features respectively. Denote their corresponding spaces as $\calS$ and $\calU$. 
The random variable $A$ denotes the taken action whose space is given by a finite set $\calA$, and $\{R(a)\}_{a \in \calA}$ denotes a set of the potential/counterfactual rewards, where $R(a)$ represents the reward that the agent would receive had action $A = a$ been taken. Assuming consistency in the causal inference literature \citep[see e.g.,][]{rubin2005causal}, the observed reward in the data, denoted by $R$, can then be written as $R = \sum_{a \in \calA}R(a)\mathbb{I}(A = a)$. 

Consider a policy $\pi: \calS \rightarrow \calP(\calA)$ as a function mapping from the observed feature $S$ into $\calP(\calA)$, a class of 
all probability distributions over $\calA$. 
In particular, $\pi(a|s)$ refers to the probability of choosing an action $a$ given that $S=s$. In the batch setting, we are 
given i.i.d. copies of $(S, A, R)$, where the action $A$ is generated by some behavior policy $\pi^b: \calS \times \calU \rightarrow \calP(\calA)$ that may depend on both observed and unobserved features. 
Nearly all existing solutions 
focus on finding an optimal policy $\pi^\ast$ such that
\begin{align}\label{def: regular optimal 2}
    \pi^\ast(a^\ast| s) = 1 \quad \text{if} \quad  a^\ast = \argmax_{a \in \calA}\EE\left[R(a) | S = s\right] \quad \forall s \in \calS.
\end{align}
Here we assume the uniqueness of the maximization in \eqref{def: regular optimal 2} for every $s \in \calS$. Since $U$ may confound the causal relationship of the action-reward pair in the observational data, direct implementing standard policy learning methods  will produce a biased estimator of $\pi^{\ast}$ \citep{qi2023proximal,yu2024two}. Besides, in the presence of latent confounders, 
there is \textit{no} guarantee that the standard optimal policy $\pi^\ast$ outperforms the behavior policy $\pi^b$ because $\pi^b$ depends on the unobserved information.

As discussed earlier, we take a unique perspective on this problem and aim to find a better policy beyond the standard optimal policy $\pi^\ast$ and the behavior policy $\pi^b$. The key idea is to treat the action $A$ generated by the behavior policy $\pi^b$ as an additional feature to $S$ for seeking a stronger oracle because the observed action depends on the unobserved feature $U$ and may have more information for making a better decision. 
Specifically, we 
define 
a \textit{super-policy} $\nu^\ast$ in a larger policy class 
$\Omega = \{\nu: \calS \times \calA \rightarrow \calP(\calA)\}$ such that
\begin{align}\label{def: super policy}
\nu^\ast(a^\ast | s, a') = 1 \quad \text{if} \quad  a^\ast = \argmax_{a \in \calA}\EE\left[R(a) | S = s, A = a'\right] \quad \forall (s, a') \in \calS \times \calA.
\end{align}
Here, $a'$ corresponds to the action recommended by the behavior policy, which may differ from $a^*$, the action recommended by the proposed super-policy. However, since the behavior policy $\pi^b\in \Omega$, the proposed super-policy 
is always better than $\pi^b$.
Additionally, notice that  
$$
    \EE\left[R(a) | S = s, A = a'\right] \neq \EE\left[R(a) | S = s\right],
$$
in general, because the unobserved feature $U$ will affect the distribution of the counterfactual rewards $\{R(a)\}_{a \in \calA}$ under different interventions of $A$. Hence $\nu^\ast$ can be different from $\pi^\ast$. 
See Figure \ref{fig:contextual_bandits_policies} for an illustration of the standard optimal policy $\pi^*$ and the proposed super policy $\nu^*$. 
Specifically, let $\calV(\nu)$ be the value (i.e., expected reward) under the intervention of a generic policy $\nu \in \Omega$, i.e., 
$$
\calV(\nu) = \sum_{a \in \calA}\EE[R(a)\nu(a\mid S, A)].
$$ We have the following lemma that demonstrates the \textit{super-optimality} of $\nu^\ast$.  over both $\pi^\ast$ and $\pi^b$. 
\begin{lemma}[Super-Optimality]\label{lm: superoptimality}
 $\calV(\nu^\ast) \geq \max\{ \calV(\pi^b), \calV(\pi^\ast)\}$.
\end{lemma}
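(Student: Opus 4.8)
The plan is to prove one stronger statement — that $\nu^\ast$ maximizes the value $\calV(\cdot)$ over the \emph{entire} class $\Omega$ — and then to realize both $\pi^\ast$ and $\pi^b$ as (the analogues of) elements of $\Omega$, so that the claimed inequality follows immediately. First I would rewrite $\calV$ in conditional form: for any $\nu\in\Omega$, since $\nu(a\mid S,A)$ is $\sigma(S,A)$-measurable, the tower property gives
\[
\calV(\nu)=\sum_{a\in\calA}\EE[R(a)\,\nu(a\mid S,A)]=\EE\Big[\sum_{a\in\calA}\nu(a\mid S,A)\,\EE[R(a)\mid S,A]\Big].
\]
For each realization of $(S,A)$ the inner sum is a convex combination of the numbers $\{\EE[R(a)\mid S,A]\}_{a\in\calA}$, hence bounded above by $\max_{a\in\calA}\EE[R(a)\mid S,A]$, with equality precisely when all the mass is placed on a maximizer. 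By \eqref{def: super policy} the policy $\nu^\ast$ does exactly that for every $(s,a')$, so
\[
\calV(\nu)\;\le\;\EE\Big[\max_{a\in\calA}\EE[R(a)\mid S,A]\Big]\;=\;\calV(\nu^\ast)\qquad\text{for every }\nu\in\Omega.
\]

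It then remains to exhibit $\pi^\ast$ and $\pi^b$ inside $\Omega$. For $\pi^\ast$ I would take $\tilde\pi^\ast(a\mid s,a'):=\pi^\ast(a\mid s)$, i.e., simply discard the extra action argument; then $\calV(\tilde\pi^\ast)=\sum_{a}\EE[R(a)\pi^\ast(a\mid S)]=\calV(\pi^\ast)$, and the display above yields $\calV(\pi^\ast)\le\calV(\nu^\ast)$. For $\pi^b$ the natural element of $\Omega$ is the ``copy'' policy $\nu^{\mathrm c}(a\mid s,a'):=\mathbb{I}(a=a')$, which re-uses whatever action the behavior agent recommended; by consistency, $\calV(\nu^{\mathrm c})=\sum_{a}\EE[R(a)\mathbb{I}(A=a)]=\EE[R]$, which is the value of the behavior policy. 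Hence $\calV(\pi^b)=\calV(\nu^{\mathrm c})\le\calV(\nu^\ast)$, and combining the two bounds gives $\calV(\nu^\ast)\ge\max\{\calV(\pi^b),\calV(\pi^\ast)\}$.

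The one point that needs care — and the only place where anything beyond the tower property and ``a convex combination is at most its maximum'' enters — is the sense in which $\pi^b\in\Omega$ and the identification $\calV(\pi^b)=\EE[R]$. Strictly speaking $\pi^b$ is a map on $\calS\times\calU$, not on $\calS\times\calA$, so it is not literally a member of $\Omega$; the content of the claim is that \emph{deploying} $\pi^b$ is operationally equivalent to the copy policy $\nu^{\mathrm c}$ once the behavior action $A$ has been observed, and that the two have the same value. If one instead defines $\calV(\pi^b):=\sum_{a}\EE[R(a)\,\pi^b(a\mid S,U)]$ directly, the equality $\calV(\pi^b)=\EE[R]$ follows from the latent-unconfoundedness condition $A\perp\{R(a)\}_{a\in\calA}\mid(S,U)$, the standard assumption in the confounded-bandit setting. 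Either way the argument is short, so I would flag this identification explicitly rather than let it pass silently; the rest is routine.
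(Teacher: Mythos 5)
Your proof is correct and follows essentially the same route as the paper's: condition on $(S,A)$, use the pointwise optimality of $\nu^\ast$ over the conditional means $\EE[R(a)\mid S,A]$, and realize $\pi^\ast$ (by ignoring the extra argument) and $\pi^b$ (as the copy policy $\mathbb{I}(a=a')$, whose value is $\EE[R]$ by consistency) as members of $\Omega$. Your explicit flag that equating $\calV(\pi^b)$ with $\EE[R]$ rests on latent unconfoundedness is a fair point of care that the paper's own proof passes over silently, but it does not change the argument.
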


\begin{figure}[h]
\centering
    \vspace*{-80pt}
    \includegraphics[width = 0.75\textwidth]{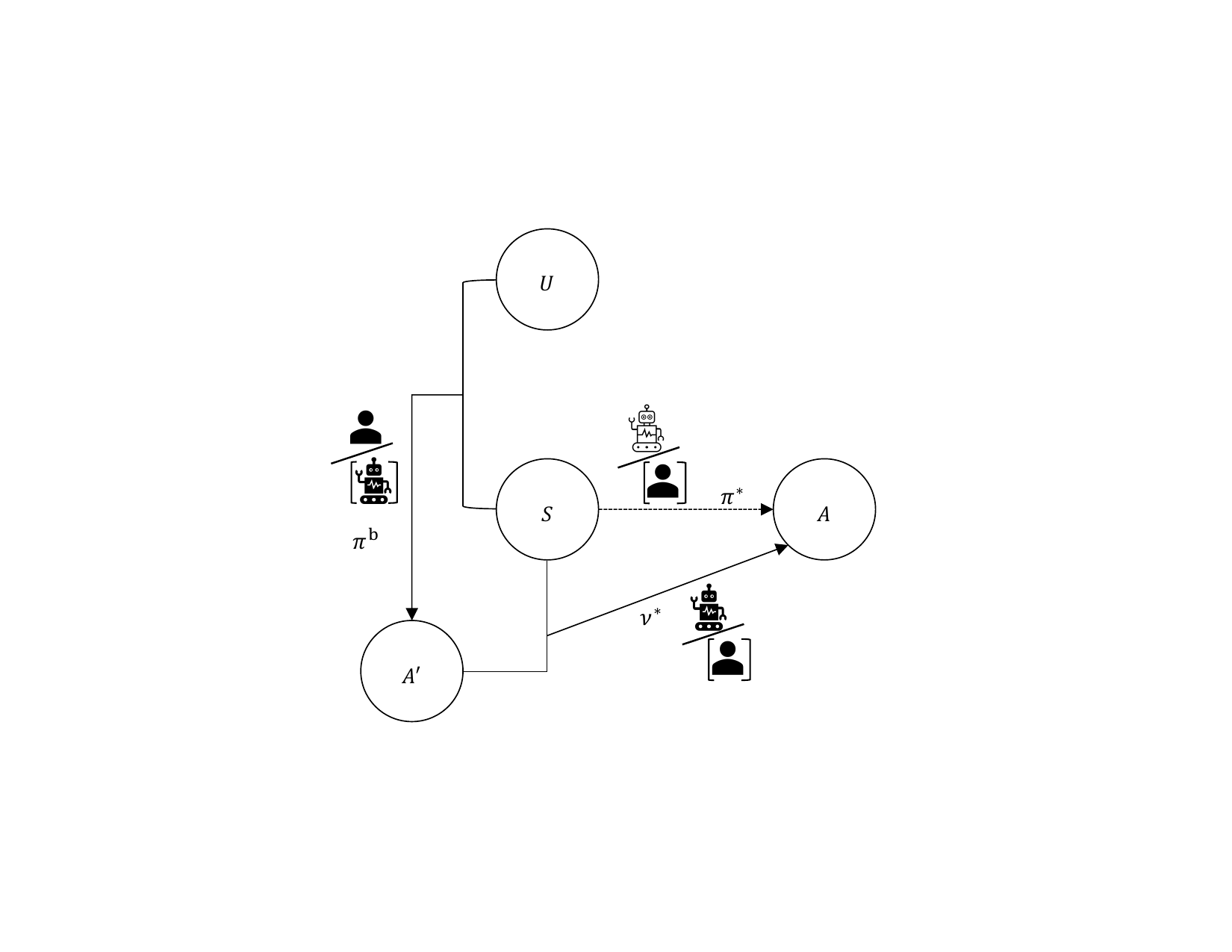}
    \vspace*{-70pt}
    \caption{Graphical models for the data generating process under a behavior policy $\pi^b$, a standard optimal policy $\pi^*$ (dashed line) and a super policy $\nu^*$. }
    \label{fig:contextual_bandits_policies}
\end{figure}

\subsection{An Illustrative Example on Human-AI Interaction}
\noindent
Lemma \ref{lm: superoptimality} ensures the advantage of leveraging Human-AI interaction in the decision making, so-called ``a blessing from Human-AI interaction". 
For instance, one can interpret the behavior policy $\pi^b$ as given by the AI system, which is capable of providing decisions based on massive information. As a human decision maker, despite the limitation to access all data information, she can make a better decision $\nu^\ast$ based on the recommendation given by the AI system. One can also interpret $\pi^b$ as the behavior policy given by the human agent which involves unique human insights. The super-policy $\nu^*$ learned by the machine utilizes such information and are guaranteed to outperform the human agents and the common policy that does not rely on human recommendations.  
In Section \ref{sec:finite horizon}, we extend our framework to the setting where human and AI are iteratively interacting and making better decisions via finding the super-policy.

To further understand the appealing property of the proposed super-policy, consider the following toy example. Assume $S$ and $U$ independently follow a Bernoulli distribution with a success probability $0.5$.  Suppose the action is binary ($\calA=\{0, 1\}$) and the behavior policy satisfies $\mathbb{P}(A=1|S,U=1)=\mathbb{P}(A=0|S,U=0)=1-\varepsilon$ for some $0\le \varepsilon\le 1$. Let $R=8(A-0.5)(S-0.2)(U-0.3)$. In this example, the parameter $\varepsilon$ measures the degree of unmeasured confounding. When $\varepsilon=0.5$, the behavior policy does not depend on $U$ and the no-unmeasured-confounding assumption is satisfied. Otherwise, this condition is violated. In particular, when $\varepsilon=0$ or $1$, we can fully recover the latent confounder based on the recommended action. 
Table \ref{tab:toy_example} summarizes the policy values of $\pi^b$, $\pi^\ast$ and $\nu^\ast$ under different $\varepsilon$, in which the super-optimality clearly holds. 
\begin{table}[ht!]
\caption{Policy values under different choices of $\varepsilon$ in the toy example. In general, $\calV(\pi_b) = 0.6 - 1.2\varepsilon$, $\calV(\pi^\ast) = 0.4$, $ \calV(\nu^\ast) = |0.7 - \varepsilon| + |\varepsilon - 0.3|$. Bold values are the largest  under different settings.  }
\label{tab:toy_example}
\begin{center}
    \vspace*{-20pt}
\begin{tabular}{| l|lll |}
\hline
Policy Value &\bf $\calV(\pi_b)$  &\bf $\calV(\pi^\ast)$ &$\calV(\nu^\ast)$\\ \hline 
$\varepsilon = 0.5$  &    0.0  & \textbf{0.4}   & \textbf{0.4} \\
\hline
$\varepsilon = 0$     &    0.6   &  0.4   &   \textbf{1.0}   \\\hline
$\varepsilon = 1$      &   -0.6   &0.4 &  \textbf{1.0}\\\hline
\end{tabular}
\vspace*{-25pt}
\end{center}
\end{table}

\subsection{When is the super-optimality strict?}
As seen from Table \ref{tab:toy_example}, when $\varepsilon = 0$, the super-policy has the same performance as the standard optimal one $\pi^\ast$. This is due to the fact that the behavior policy $\pi^b$ does not provide additional information. To further understand when the strict improvement of $\nu^\ast$ over $\pi^b$ and $\pi^\ast$ happens, consider a binary-action setting with $\calA = \{0, 1\}$, where $1$ denotes the new treatment group and $0$ denotes the standard control. Define the conditional average treatment effect on the treated (CATT) and on the control (CATC) respectively as 
    \begin{align*}
        \mathrm{CATT}(s) = \EE\{R(1) - R(0) \mid S = s, A = 1 \},\\
        \mathrm{CATC}(s) = \EE\{R(1) - R(0) \mid S = s, A = 0 \}.
    \end{align*}
    given any $s \in \calS$. Then we have the following lemma, which explicitly characterizes the super-optimality of $\nu^\ast$ over $\pi^\ast$ and $\pi^b$.

    \begin{lemma}
\label{lem:improvement}
    The following three results hold.\\
    (i) $\calV(\nu^*) > \calV(\pi^*)$ if and only if 
    $
    \Pr\left(\left\{ 0 < \pi^b(1 | S ) < 1\right\} \cap \left\{\mathrm{CATT}(S) \times \mathrm{CATC}(S)<0\right\}\right) >0;
    $
    (ii) $\calV(\nu^*) > \calV(\pi^b)$ if and only if
    $
    \Pr\left(\left\{ \mathrm{CATT}(S) < 0\right\} \cup \left\{\mathrm{CATC}(S)>0\right\}\right) >0;
    $\\
    and (iii) $\calV(\nu^*) > \max\{\calV(\pi^*), \calV(\pi^b)\}$ if and only if 
    $$
    \Pr\left(\left\{ 0 < \pi^b(1 | S ) < 1\right\} \cap \left\{ \mathrm{CATT}(S) < 0\right\} \cap \left\{\mathrm{CATC}(S)>0\right\}\right) >0;
    $$
\end{lemma}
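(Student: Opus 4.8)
The plan is to write the three policy values as expectations over $S$ of pointwise quantities, obtain closed forms for the two relevant value gaps $\calV(\nu^*)-\calV(\pi^b)$ and $\calV(\nu^*)-\calV(\pi^*)$, and then read off the three equivalences by checking when the (manifestly nonnegative) integrands are strictly positive. Write $p(s)=\pi^b(1\mid S=s)=\Pr(A=1\mid S=s)$, and for $a,a'\in\{0,1\}$ set $\mu_{a\mid a'}(s)=\EE[R(a)\mid S=s,A=a']$ and $\mu_a(s)=\EE[R(a)\mid S=s]$, so that $\mu_a(s)=p(s)\mu_{a\mid1}(s)+(1-p(s))\mu_{a\mid0}(s)$, while $\mathrm{CATT}(s)=\mu_{1\mid1}(s)-\mu_{0\mid1}(s)$ and $\mathrm{CATC}(s)=\mu_{1\mid0}(s)-\mu_{0\mid0}(s)$ (here $\mathrm{CATT}(s)$ is read only where $p(s)>0$ and $\mathrm{CATC}(s)$ only where $p(s)<1$). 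Conditioning on $(S,A)$ and invoking consistency and the definitions of the three policies, with $\pi^b$ viewed in $\Omega$ as the ``copy'' policy (value $\EE[R]$), one gets
$$\calV(\pi^b)=\EE\big[p(S)\mu_{1\mid1}(S)+(1-p(S))\mu_{0\mid0}(S)\big],\qquad \calV(\pi^*)=\EE\big[\max\{\mu_1(S),\mu_0(S)\}\big],$$
$$\calV(\nu^*)=\EE\big[p(S)\max\{\mu_{1\mid1}(S),\mu_{0\mid1}(S)\}+(1-p(S))\max\{\mu_{1\mid0}(S),\mu_{0\mid0}(S)\}\big].$$

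Next I use $\max\{x,y\}-x=(y-x)_+$ with $(x)_+:=\max\{x,0\}$, together with $\max\{\mu_1,\mu_0\}=\mu_0+(\mu_1-\mu_0)_+$ and $\mu_1-\mu_0=p\,\mathrm{CATT}+(1-p)\,\mathrm{CATC}$, to obtain
$$\calV(\nu^*)-\calV(\pi^b)=\EE\big[p(S)(-\mathrm{CATT}(S))_+ + (1-p(S))(\mathrm{CATC}(S))_+\big],$$
$$\calV(\nu^*)-\calV(\pi^*)=\EE\Big[p(S)(\mathrm{CATT}(S))_+ + (1-p(S))(\mathrm{CATC}(S))_+ - \big(p(S)\mathrm{CATT}(S)+(1-p(S))\mathrm{CATC}(S)\big)_+\Big].$$
Convexity of $x\mapsto(x)_+$ makes both integrands nonnegative, so each gap is $\ge0$, recovering Lemma~\ref{lm: superoptimality} for $\calA=\{0,1\}$; and a nonnegative integral is strictly positive iff its integrand is strictly positive on a set of positive probability. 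Writing $t=\mathrm{CATT}(s)$, $c=\mathrm{CATC}(s)$, $p=p(s)$, the first integrand is positive iff $(p>0$ and $t<0)$ or $(p<1$ and $c>0)$, which gives (ii); and a short case split on the signs of $t,c$ shows the second integrand is positive iff $0<p<1$ and $tc<0$, which gives (i).

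For (iii), since $\calV(\nu^*)\ge\max\{\calV(\pi^*),\calV(\pi^b)\}$ always, the strict inequality $\calV(\nu^*)>\max\{\calV(\pi^*),\calV(\pi^b)\}$ holds iff \emph{both} gaps above are strictly positive, i.e.\ iff the conditions of (i) and (ii) hold simultaneously. Let $E_3=\{0<\pi^b(1\mid S)<1\}\cap\{\mathrm{CATT}(S)<0\}\cap\{\mathrm{CATC}(S)>0\}$. The ``if'' direction is immediate: on $E_3$ one has $\mathrm{CATT}(S)\mathrm{CATC}(S)<0$ with $0<p(S)<1$, so $E_3$ sits inside the event of (i), and $\mathrm{CATT}(S)<0$, so $E_3$ sits inside the event of (ii); hence $\Pr(E_3)>0$ forces both gaps to be positive. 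For the converse one must show that the events of (i) and (ii) can carry simultaneously positive probability only when $\Pr(E_3)>0$; I would attempt this by partitioning $\calS$ according to the sign pattern of $(\mathrm{CATT}(S),\mathrm{CATC}(S))$ and the location of $p(S)$ in $\{0\},(0,1),\{1\}$, and tracking which cells of the partition contribute to which gap.

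The gap identities and the pointwise sign analyses for (i) and (ii) are routine. The main obstacle is the necessity half of (iii): a priori two events can each have positive probability while their intersection is null, so the reduction to $E_3$ cannot be purely measure-theoretic and must exploit the particular structure of the two integrands displayed above (and, if need be, a mild additional regularity condition, e.g.\ on the sign of $\EE[R(1)-R(0)\mid S]$). This is the step I expect to demand the most care.
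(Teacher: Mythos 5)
Your treatment of (i) and (ii) is correct and is essentially the paper's own argument in different clothing: the paper conditions through $U$, writes $\max\{x,y\}=\tfrac{x+y}{2}+\tfrac{|x-y|}{2}$, and arrives at $\calV(\nu^*)-\calV(\pi^*)=\EE\big[\min\{p(S)|\mathrm{CATT}(S)|,\,(1-p(S))|\mathrm{CATC}(S)|\}\,\mathbf{1}\{\mathrm{CATT}(S)\,\mathrm{CATC}(S)<0\}\big]$ and $\calV(\nu^*)-\calV(\pi^b)=\tfrac12\EE\big\{p(S)[|\mathrm{CATT}(S)|-\mathrm{CATT}(S)]+(1-p(S))[|\mathrm{CATC}(S)|+\mathrm{CATC}(S)]\big\}$, which are exactly your positive-part expressions (via $|x|=(x)_++(-x)_+$); reading off when these nonnegative integrands are positive gives (i) and (ii) just as you do, with the same implicit convention that $\mathrm{CATT}$ is only read where $p(S)>0$ and $\mathrm{CATC}$ where $p(S)<1$.

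The genuine gap is (iii), and your instinct that its necessity half "cannot be purely measure-theoretic" is right in the strongest possible sense: the paper's appendix proof stops after (ii) and never argues (iii), and the "only if" direction of (iii) is in fact false as stated, so no refinement of the case analysis on the two integrands can close your gap without changing the statement. Concretely, let $S\in\{s_1,s_2\}$ with probability $1/2$ each; at $s_1$ take $U\sim\mathrm{Bernoulli}(1/2)$, $A=U$, $R(1)-R(0)=2U-1$, so $p(s_1)=1/2$, $\mathrm{CATT}(s_1)=1>0$, $\mathrm{CATC}(s_1)=-1<0$; at $s_2$ take $p(s_2)=1$ and $\mathrm{CATT}(s_2)=-1<0$. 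Then the gap over $\pi^*$ is positive (contribution $\tfrac14$ from $s_1$, where $\nu^*$ simply copies $\pi^b$) and the gap over $\pi^b$ is positive (contribution $\tfrac12$ from $s_2$, where $\nu^*$ overrides the always-treat behavior), so $\calV(\nu^*)>\max\{\calV(\pi^*),\calV(\pi^b)\}$, yet the event $\{0<\pi^b(1\mid S)<1\}\cap\{\mathrm{CATT}(S)<0\}\cap\{\mathrm{CATC}(S)>0\}$ is null. What your computation does deliver is the "if" direction of (iii) together with the correct equivalence: $\calV(\nu^*)>\max\{\calV(\pi^*),\calV(\pi^b)\}$ if and only if both the event in (i) and the event in (ii) have positive probability (not necessarily on a common set of states). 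To obtain a statement of the lemma's form one must either weaken (iii) to that conjunction or impose an extra condition forcing the two improvement regions to overlap; as written, only sufficiency can be proved, and that part of your argument is complete.
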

\noindent
Result (i) of Lemma \ref{lem:improvement} indicates that by treating $A$ as an additional feature, as long as $A$ is informative for achieving a better expected reward for some feature $S$, we have the strict improvement of $\nu^\ast$ over $\pi^\ast$. Meanwhile, Result (ii) of Lemma \ref{lem:improvement} implies that as long as the alternative action is better than the one recommended by $\pi^b$ for some feature $S$, strict improvement of $\nu^\ast$ over $\pi^b$ is guaranteed. Clearly, Result (iii) ensures the strict super-optimality when the previous two scenarios happen simultaneously.

\subsection{Super RL for Sequential Decision Making} 
\label{sec:finite horizon}

In this section, we introduce the super-policy for confounded sequential decision making and demonstrate its super-optimality. 
For any generic sequence $\{X_t\}_{t\geq 1}$, its realization $\{x_t\}_{t\geq1}$ and its spaces $\{\calX_t\}_{t\geq1}$, we
denote $X_{1:t} = (X_1,\dots, X_t)$, $x_{1:t} = (x_1, \dots, x_t)$ and $\mathcal{X}_{1:t} = \prod_{t'=1}^t \calX_{t'}$.

Consider an episodic and confounded stochastic process denoted by $\calM = ( T,  \mathcal{O}, \calU, \calA, 
 \mathcal{P}, \calR)$, where the integer $T$ 
is the total length of horizon, $\mathcal{O} =\{\mathcal{O}_t\}_{t=1}^T$ and $\calU = \{\calU\}_{t=1}^T$ denote the spaces of observed and unobserved features respectively, 
$\calA = \{\calA_t\}_{t=1}^T$ denotes 
the action spaces across $T$ decision points,  $\mathcal{P} = \{\mathcal{P}_t\}_{t=1}^T$ where each $\calP_t$ denotes transition kernel from $\prod_{t'=1}^{t} (\mathcal{O}_{t'} \times \calU_{t'} \times \calA_{t'}) \rightarrow \mathcal{O}_{t+1} \times \calU_{t+1}$ at time $t$, and $\calR$ denotes the set of rewards. 
The random process following $\calM$ can be summarized as $\{O_t, U_t, A_t, R_t\}_{t=1}^T$, where $O_t$ and $U_t$ correspond to the observed and latent features at time $t$, $A_t$ and $R_t$ denote the action and the reward at time $t$. We assume that $O_t$ is some noisy mapping of $U_{1:t}$ and satisfies that $O_t \ind (O_{1:t-1}, A_{1:t}) | U_{1:t}$ for every $1\leq t\leq T$. 
For simplicity, we assume the action space is discrete and all rewards are uniformly bounded, i.e., $\abs{R_t} \leq R_{\max}$.   In the offline setting, we assume the observed action $A_t$ in the batch data is generated by some behavior policy $\pi^b_t:   \calU_{1:t}  \times \calA_{1:t-1} \rightarrow \calP(\calA_t)$ for $1 \leq t \leq T$ and let $\pi^b = \{\pi^b_t\}_{t=1}^T$;  Lastly, we denote the reward function as 
\rev{$r_t: \mathcal{U}_{1:t} \times \mathcal{A}_{1:t} \times \mathcal{O}_t \rightarrow \mathbb{R}$, i.e., $r_t = \EE[R_t | U_{1:t} = \bullet, A_{1:t} = \bullet, O_t = \bullet]$.}


Given the decision process $\calM$ generated by the behavior policy,  the objective of an agent is to learn an (in-class) optimal  policy 
that can maximize the expected cumulative rewards. Nearly all existing 
works are focused on policies defined as a sequence of functions mapping from the past history (excluding the actions produced by the behavior agent) to a probability mass function over the action space $\calA_t$. Specifically, let $\Pi\equiv\{\pi = \{\pi_t \}_{t=1}^T \, \mid \, \pi_t: \mathcal{O}_{1:t} \times \calA_{1:t-1} \rightarrow \calP(\mathcal{A}_t)\}$. Then for any $\pi \in \Pi$, one can use the policy value to evaluate its performance, which is defined as
\begin{align}\label{def: integrated value fun}
	\calV(\pi) = \EE[\Vpi_1(O_1, U_1)],
\end{align}
where we use $\EE$ to denote the expectation with respect to the initial data distribution. Here the value function $\Vpi_t$ is defined as for every $(o_{1:t}, u_{1:t}) \in \mathcal{O}_{1:t} \times  \calU_{1:t}$
\begin{align}
    \label{eqn: Q-function}
\textstyle	\Vpi_t(o_{1:t}, u_{1:t}) = \EE^\pi\left[\sum_{t' = t}^{T}R_{t'} | O_{1:t} = o_{1:t}, U_{1:t} = u_{1:t}\right],
\end{align}
where $\EE^\pi$ denotes the expectation with respect to the distribution whose action at each time $t$ follows $\pi_t$. Since $\{U_t\}_{t\geq1}$ is not observed,
 previous works such as \cite{lu2022pessimism} focus on finding $\pi^\ast$ such that 
 $$
 \pi^\ast \in \argmax_{\pi \in \Pi}\calV(\pi).
 $$
 Similar to the contextual bandit setting, $\pi^\ast$ is not guaranteed outperforming $\pi^b$.
 Motivated by the discussions in Section \ref{sec:contextual}, we consider a much larger policy class 
 $$
 \Omega \equiv \{\nu = \{\nu_t \}_{t=1}^T \, \mid \, \nu_t: \mathcal{O}_{1:t} \times \calA_{1:t-1} \times \calA_{1:t} \rightarrow \calP(\mathcal{A}_t)\},
 $$ 
 where $\calA_{1:t-1}$ represents the past actions taken by the policy $\nu$ up to $(t-1)$ decision points and $\calA_{1:t}$ represents the actions generated by the behavior policy up to $t$ decision points. The policy class $\Omega$ reflects the iterative interaction between human and AI because either $\nu$ or $\pi^b$ can be regarded as human or AI. Therefore, we 
 propose to learn a super policy $\nu^\ast$ such that
 $$
 \nu^\ast \in \argmax_{\nu \in \Omega}\calV(\nu),
 $$
 which leverages human or machine expertise for enhanced decision making that maximizes $\calV(\nu)$. By sequentially integrating historical and current actions of the behavioral agent into the decision-making process, the super policy leverage the iterative human-AI interaction, as illustrated in Figure \ref{fig:DTR_policy}.

\begin{figure}[h]
\centering
    \vspace*{-60pt}
    \includegraphics[width = 0.75\textwidth]{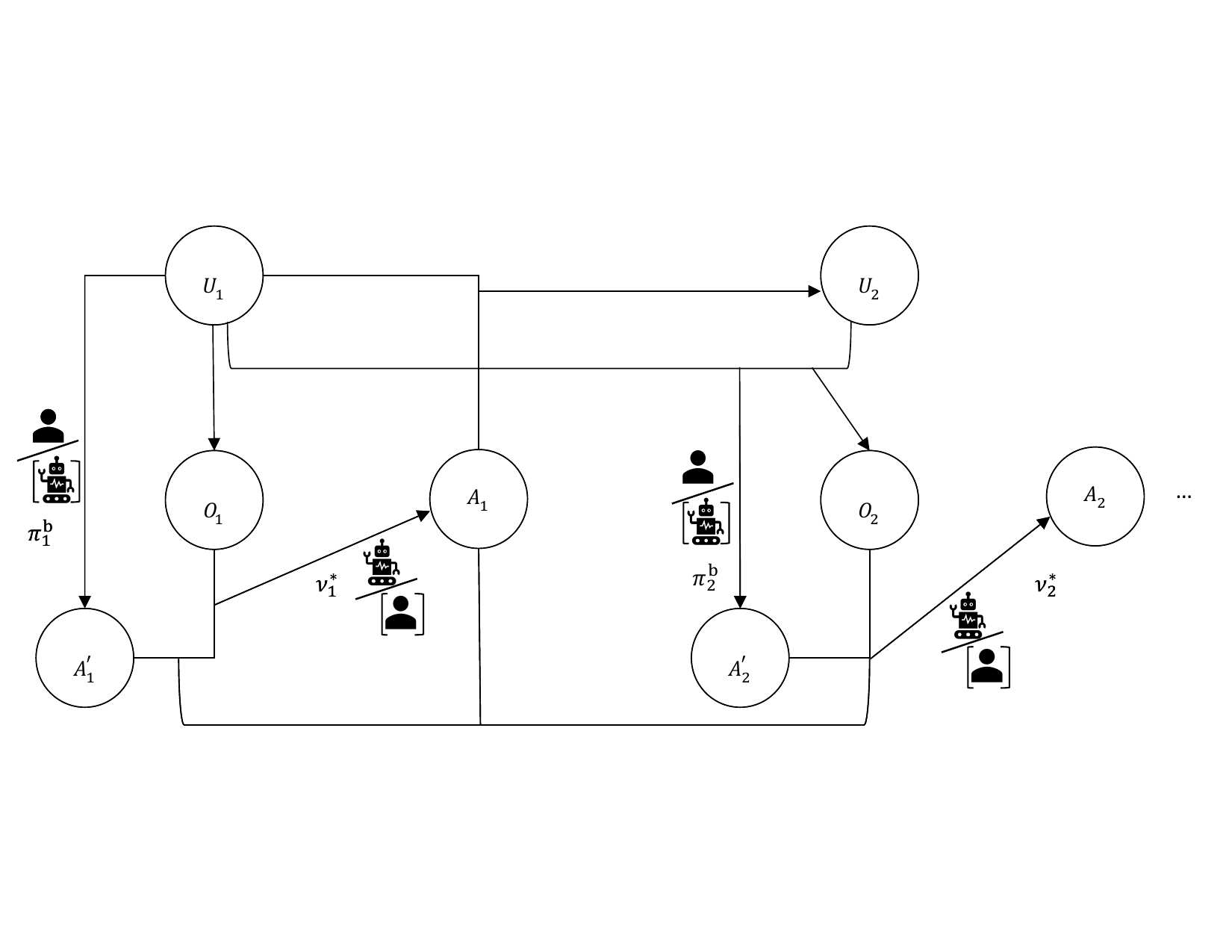}
    \vspace*{-60pt}
    \caption{Graphical model for the sequential decision making under super policy $\nu^*$ by leveraging the offline data generated by the behavior policy  with $T=2$. Here  $\pi^b_1$ depends on $U_1$, $\pi^b_2$ depends on $U_1, U_2, A_1$; $\nu^*_1$ depends on $O_1, A'_1$, $\nu^*_2$ depends on $O_1, O_2, A_1, A'_1, A_2'$.  }
    \label{fig:DTR_policy}
\end{figure}

Similar as before, 
since the super-policy additionally uses the recommendation generated by the behavior policy that depends on the unobserved information, we expect
the super-policy $\nu^\ast$ superior to both $\pi^\ast$ and $\pi^b$, which is shown below.
\begin{theorem}[Super-Optimality]\label{prop:sup_in_finite}
   $\calV(\nu^*)\ge \max\{\calV(\pi^*), \calV(\pi^b)\}$.
\end{theorem}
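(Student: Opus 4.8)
The plan is to establish the two inequalities $\calV(\nu^*)\ge\calV(\pi^*)$ and $\calV(\nu^*)\ge\calV(\pi^b)$ separately, in each case by exhibiting a single policy inside the enlarged class $\Omega$ whose value equals the relevant right-hand side. Since $\nu^*$ maximizes $\calV$ over $\Omega$, combining the two gives the theorem. The only real content is to verify that the two exhibited policies are genuine members of $\Omega$ and that they induce the claimed trajectory laws; the rest is bookkeeping. (An equivalent route via backward induction on Bellman-type optimality equations is possible, but the embedding argument mirrors the reasoning already used for Lemma~\ref{lm: superoptimality} and is cleaner.)

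For $\calV(\nu^*)\ge\calV(\pi^*)$, I would lift an arbitrary $\pi=\{\pi_t\}\in\Pi$ to $\tilde\pi=\{\tilde\pi_t\}\in\Omega$ by $\tilde\pi_t(a\mid o_{1:t},a_{1:t-1},a'_{1:t})=\pi_t(a\mid o_{1:t},a_{1:t-1})$, i.e.\ simply ignoring the behavior agent's recommendation stream $a'_{1:t}$. Because each transition kernel $\calP_t$ and each reward function $r_t$ depends on the history only through the observed/unobserved features and the actually taken actions --- never through the (ignored) recommendations --- the process $\{O_t,U_t,A_t,R_t\}_{t=1}^T$ generated under $\tilde\pi$ has the same law as under $\pi$, so $\calV(\tilde\pi)=\calV(\pi)$ in the sense of \eqref{def: integrated value fun}--\eqref{eqn: Q-function} on the enlarged history. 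Taking $\pi=\pi^*$ yields $\calV(\nu^*)\ge\calV(\tilde\pi^*)=\calV(\pi^*)$.

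For $\calV(\nu^*)\ge\calV(\pi^b)$, consider the ``copy-the-recommendation'' policy $\nu^{\mathrm{cp}}=\{\nu^{\mathrm{cp}}_t\}\in\Omega$ with $\nu^{\mathrm{cp}}_t(a\mid o_{1:t},a_{1:t-1},a'_{1:t})=\mathbb{I}(a=a'_t)$; this is admissible precisely because $\Omega$ allows the current recommendation $A'_t$ as an input to $\nu_t$. Under $\nu^{\mathrm{cp}}$ the action actually taken satisfies $A_t=A'_t$ for every $t$, and I would then show by induction on $t$ that $(O_{1:t},U_{1:t},A_{1:t})$ has the same law under $\nu^{\mathrm{cp}}$ as the offline trajectory generated by $\pi^b$: the recommendation at time $t$ is drawn as $A'_t\sim\pi^b_t(\cdot\mid U_{1:t},A_{1:t-1})$ with $A_{1:t-1}$ the realized past actions, which by the inductive hypothesis and the copying rule coincide in law with the offline actions, so $A_t=A'_t$ inherits the offline conditional law; the ensuing reward $R_t$ and transition to $(O_{t+1},U_{t+1})$ are governed by the same $r_t$ and $\calP_t$ as offline. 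Hence $\sum_{t=1}^T R_t$ has the same distribution under $\nu^{\mathrm{cp}}$ as under $\pi^b$, giving $\calV(\nu^{\mathrm{cp}})=\calV(\pi^b)$ and thus $\calV(\nu^*)\ge\calV(\pi^b)$.

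The main obstacle is the induction in the second part: one must first pin down unambiguously how the behavior agent's recommendation stream $\{A'_t\}$ is generated when a \emph{general} $\nu\in\Omega$ (here $\nu^{\mathrm{cp}}$) is deployed --- in particular that $\pi^b_t$ is fed the realized history including $\nu$'s own past actions, consistent with Figure~\ref{fig:DTR_policy} --- and then check that under the copying rule this realized history is distributionally identical to the offline one at every step, not merely at the first. A closely related housekeeping point is to state explicitly the definition of $\calV(\nu)$ for $\nu\in\Omega$ (the analogue of \eqref{def: integrated value fun}--\eqref{eqn: Q-function} on the history enlarged by $A'_{1:t}$), so that $\calV(\tilde\pi^*)=\calV(\pi^*)$ and $\calV(\nu^{\mathrm{cp}})=\calV(\pi^b)$ are literal identities.
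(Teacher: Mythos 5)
Your proposal is correct and follows essentially the same route as the paper: the paper also proves the theorem by embedding $\pi^b$ into $\Omega$ via the copy-the-recommendation policy $\nu_t(a\mid \cdot, a') = \mathbf{1}(a=a')$ and embedding $\pi^*$ as a policy that ignores the recommendation, then invoking the optimality of $\nu^*$ over $\Omega$. Your version is in fact more careful than the paper's one-line argument, since you spell out the induction showing the trajectory law under the copy policy coincides with the offline law generated by $\pi^b$.
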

Given the appealing property of the super policy, in the following section, we discuss how to identify it using the offline data.

\section{Causal Identification for Super Policies} 
\label{sec:identification}
Despite the appealing property of the proposed super policy, it is generally impossible to learn $\nu^\ast$ without any further assumptions, since for example, in the contextual bandit setting, the counterfactual effect $\EE\left[R(a) | S = s, A = a'\right]$ is not identifiable from the observed data due to unmeasured confounding.  In this section,  we extend the idea of proximal causal inference \citep{tchetgen2020introduction} to address the challenge posed by unmeasured confounding, and develop several nonparametric identification results for super policies within the contexts of the contextual bandit and the sequential setting.  \rev{The existing identification results in the proximal causal inference are only designed for finding standard optimal policies that depend on the observed state variables. The proposed super-policy additionally depends on the action $A$ generated by the behavior policy. Its identification is challenging since $A$ always equals the selected action in the observational data, while we need to learn the value where the selected action differs from $A$ for super-policy learning.  Extensions to the sequential setting are even more challenging to develop,  as we aim to include two sets of past actions for policy search, i.e., those generated by the behavior policy and by the super-policy. }

\subsection{\revv{Identification in Contextual Bandits}}
\label{sec:identification_contextual_bandits}
In the bandit setting,  similar to \cite{tchetgen2020introduction}, we assume the existence of certain action and reward proxies $Z \in \calZ$ and $W \in \calW$ in 
addition to $(S, A, R)$.  
\rev{With the help of these two proxies, we introduce the bridge function $q$ and verify that it can be used to identify the value of any super policy $\nu \in \Omega$. Then we provide identification for $q$ function using the observed data.  }
The two proxies are required to satisfy the following assumptions:
\begin{assumption}\label{ass: proximal}
(a) $R \indep Z | (S, U , A)$; (b) $W \indep (Z, A) | (S, U)$, $W \dep U | S $; (c) $R(a) \indep A | (S, U)$ for $a \in \calA$; (d) There exists a bridge function $q: \calW \times \calA \times \calS \rightarrow \mathbb{R}$ such that
    \begin{align}\label{def: outcome bridge}
        \EE\left[q(W, a, S) | U, S, A = a \right] = \EE\left[R | U, S, A = a \right].
    \end{align}
\end{assumption}
Assumptions \ref{ass: proximal}(a)-(b) are standard in proximal causal inference \citep{miao2018confounding}. Assumptions \ref{ass: proximal}(c), which is called latent unconfoundedness, is mild as we allow $U$ to be unobserved. \revv{There are many examples of proxy variables in practice. In DBS, for instance, the negative severity of the disease can serve as the reward, while the underlying mental state acts as the unmeasured confounder.  For the action proxy variable $Z$,  a suitable choice could be the patient's historical medical records.  As for the reward proxy variable $W$,  a practical example could be pre-treatment evaluations derived from questionnaires, such as patients' self-reported anxiety ratings. These ratings reflect the underlying mental state and satisfy the necessary conditions for a reward proxy variable.
} {In {Section S1 }, we provide more detailed discussion for $W$ and $Z$.} The last assumption can be satisfied when some completeness and regularity conditions hold. See \cite{miao2018identifying} and also Lemma \ref{lm: solve q contextual} below for more details.
The following identification result allows us to learn the super-policy $\nu^\ast$ from the observed data. We remark that this result is new and different from the standard proximal causal inference. \rev{In addition to the Assumptions for proxy variables. We also assume the following basic assumptions in causal inference.}
\begin{assumption}
    \label{ass:consistency}
\rev{(a) Consistency: $R = R(A)$. (b) Positivity: $P(A= a \mid S, Z) > 0$ for any $a \in \calA$ almost surely.}
\end{assumption}
\begin{lemma}
\label{lm: Ra_identification.}
Under Assumption \ref{ass: proximal} and \ref{ass:consistency}, 
we have $\EE\left[R(a) | S = s, A = a'\right] = \EE[q(W, a, S) | S = s, A = a']$ for every $(s, a, a')$. Then for any $\nu \in \Omega$,
$$
    \calV(\nu) = \EE\left[\sum_{a \in \calA}q(W, a, S)\nu(a| S, A)\right].
    $$ 
\end{lemma}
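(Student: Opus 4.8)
The plan is to prove the key identity $\EE[R(a)\mid S=s, A=a'] = \EE[q(W,a,S)\mid S=s, A=a']$ first, and then the value formula follows by a one-line averaging argument. The crucial observation is that the bridge function equation \eqref{def: outcome bridge} holds conditionally on $(U,S,A=a)$, but the quantity we want, $\EE[R(a)\mid S=s,A=a']$, involves a \emph{different} conditioning action $a'$. So I cannot directly substitute. Instead, I first rewrite the target counterfactual in terms of an $(S,U)$-conditional quantity using latent unconfoundedness, then exploit the proxy structure to swap $R$ for $q(W,a,S)$ without ever needing to know $U$.

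\textbf{Step 1: Reduce the counterfactual to observed quantities given $(S,U)$.}
By Assumption \ref{ass: proximal}(c) (latent unconfoundedness), $R(a)\indep A\mid (S,U)$, hence
$$
\EE[R(a)\mid S,U,A=a] = \EE[R(a)\mid S,U] = \EE[R(a)\mid S,U,A=a'].
$$
By consistency, $\EE[R(a)\mid S,U,A=a] = \EE[R\mid S,U,A=a]$. Combining, $\EE[R(a)\mid S,U,A=a'] = \EE[R\mid S,U,A=a]$. Now by \eqref{def: outcome bridge}, the right-hand side equals $\EE[q(W,a,S)\mid S,U,A=a]$.

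\textbf{Step 2: Remove the dependence on $A$ in the bridge term.}
I need $\EE[q(W,a,S)\mid S,U,A=a] = \EE[q(W,a,S)\mid S,U,A=a']$, i.e.\ that this conditional expectation does not depend on the value of $A$. Since $q(W,a,S)$ is a function of $(W,S)$ only (for fixed $a$), this follows from $W\indep A\mid (S,U)$, which is part of Assumption \ref{ass: proximal}(b). Therefore
$$
\EE[R(a)\mid S,U,A=a'] = \EE[q(W,a,S)\mid S,U,A=a'].
$$

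\textbf{Step 3: Take the conditional expectation given $(S,A)$.}
Apply $\EE[\,\cdot\mid S=s, A=a']$ to both sides (using the tower property, conditioning down from $(S,U,A)$ to $(S,A)$):
$$
\EE[R(a)\mid S=s,A=a'] = \EE[q(W,a,S)\mid S=s,A=a'].
$$
This is the claimed identification.

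\textbf{Step 4: Assemble the value formula.}
For $\nu\in\Omega$, starting from the definition $\calV(\nu) = \sum_{a\in\calA}\EE[R(a)\nu(a\mid S,A)]$, condition on $(S,A)$ inside each summand:
$$
\EE[R(a)\nu(a\mid S,A)] = \EE\big[\nu(a\mid S,A)\,\EE[R(a)\mid S,A]\big] = \EE\big[\nu(a\mid S,A)\,\EE[q(W,a,S)\mid S,A]\big] = \EE[q(W,a,S)\nu(a\mid S,A)],
$$
where the middle equality uses Step 3 and the last again uses the tower property (since $\nu(a\mid S,A)$ is $(S,A)$-measurable). Summing over $a\in\calA$ gives $\calV(\nu) = \EE\big[\sum_{a\in\calA}q(W,a,S)\nu(a\mid S,A)\big]$.

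\textbf{Anticipated obstacle.}
The main subtlety is Step 2 — ensuring the bridge-function term genuinely decouples from the conditioning action. The bridge equation \eqref{def: outcome bridge} is stated only for the matched event $A=a$, so one must be careful to first peel off the $A$-dependence via the conditional independence $W\indep A\mid(S,U)$ before attempting to change the conditioning action from $a$ to $a'$; doing these two moves in the wrong order would be illegitimate. A secondary point worth stating carefully is that Assumption \ref{ass: proximal}(a) ($R\indep Z\mid(S,U,A)$) is not actually needed for this particular lemma — it is used elsewhere to \emph{construct} $q$ — so I would note that here only the existence of $q$ satisfying \eqref{def: outcome bridge}, together with (b) and (c), is invoked.
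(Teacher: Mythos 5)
Your proof is correct and follows essentially the same route as the paper's: condition on $U$, use latent unconfoundedness (Assumption \ref{ass: proximal}(c)) together with consistency to replace $R(a)$ by $\EE[R\mid U,S,A=a]$, invoke the bridge equation \eqref{def: outcome bridge}, swap the conditioning action via $W\indep A\mid(S,U)$ from Assumption \ref{ass: proximal}(b), and finish with the tower property; your Step 4 and the remark that Assumption \ref{ass: proximal}(a) is not needed here are correct additions but do not change the argument.
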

In practice, one may want to include as many confounders in the policy as possible 
to achieve the largest super-optimality. Hence under this proximal causal inference framework, with some abuse of notation, we further extend the policy class to $\Omega = \{\nu: \calS \times {\calZ} \times \calA \rightarrow \calP(\calA) \}$ and consider the corresponding super-policy $\nu^\ast$ as 
\begin{align}\label{def: super policy contextual}
\nu^\ast(a^\ast | s, {z}, a') = 1 \quad \text{if} \quad  a^\ast = \argmax_{a \in \calA}\EE\left[R(a) | S = s, {Z} = {z}, A = a'\right],
\end{align}
In applications where the action proxy is no longer available in future decision making, \eqref{def: super policy contextual} is reduced to \eqref{def: super policy}. We remark that different from $Z$, $W$ is usually obtained after intervention, which should not be included in the super-policy. The next corollary allows us to identify $\nu^{\ast}$.
\begin{corollary}\label{cor: sup-policy on Z}
Under Assumption \ref{ass: proximal} and \ref{ass:consistency}, the policy value under a given $\nu\in\Omega$ is given by $
    \calV(\nu) = \EE\left[\sum_{a \in \calA}q(W, a, S)\nu(a| S, A, {Z})\right]$. 
In addition, the optimal policy $\nu^\ast$ is given by
\begin{align}\label{def: super policy with Z}
\nu^\ast(a^\ast | s, {z}, a') = 1 \quad \text{if} \quad  a^\ast = \argmax_{a \in \calA}\EE\left[q(W, a, S) | S = s, {Z} = {z}, A = a'\right].
\end{align}
\end{corollary}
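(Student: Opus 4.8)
The plan is to prove the corollary in two steps: (1) the value identification $\calV(\nu) = \EE\big[\sum_{a\in\calA} q(W,a,S)\,\nu(a\mid S,A,Z)\big]$ for every $\nu$ in the enlarged class $\Omega = \{\nu:\calS\times\calZ\times\calA\to\calP(\calA)\}$, and (2) the optimality of the policy $\nu^\ast$ in \eqref{def: super policy with Z}. Step (2) will be a pointwise maximization that follows immediately once (1) is in hand, so the crux is (1), and (1) will in turn reduce to a single almost-sure identity, namely the analogue of Lemma~\ref{lm: Ra_identification.} with the action proxy $Z$ adjoined to the conditioning set:
\begin{align}\label{eq:prox-Z-key}
\EE[R(a)\mid S=s,Z=z,A=a'] = \EE[q(W,a,S)\mid S=s,Z=z,A=a'] \qquad \text{for all } (s,z,a,a').
\end{align}

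To see that \eqref{eq:prox-Z-key} suffices, note first that by the definition of the value under intervention together with consistency, $\calV(\nu) = \sum_{a\in\calA}\EE[R(a)\,\nu(a\mid S,A,Z)]$: the pre-treatment proxy $Z$ and the behavior action $A$ are not altered by the intervention that replaces the current action with a draw from $\nu$. Since $\nu(a\mid S,A,Z)$ is a function of $(S,A,Z)$, the tower property gives $\calV(\nu) = \sum_{a}\EE\big[\EE[R(a)\mid S,A,Z]\,\nu(a\mid S,A,Z)\big]$, and likewise $\EE\big[\sum_a q(W,a,S)\,\nu(a\mid S,A,Z)\big] = \sum_a \EE\big[\EE[q(W,a,S)\mid S,A,Z]\,\nu(a\mid S,A,Z)\big]$; hence the value formula is exactly \eqref{eq:prox-Z-key} integrated against $\nu$ and summed over $a$.

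For \eqref{eq:prox-Z-key} I would follow the argument behind Lemma~\ref{lm: Ra_identification.}, carrying $Z$ through each conditioning. Conditioning additionally on $U$ and applying the tower property to both sides, it is enough to show that $\EE[q(W,a,S)\mid S,U,Z,A=a']$ and $\EE[R(a)\mid S,U,Z,A=a']$ coincide almost surely, since the two sides of \eqref{eq:prox-Z-key} are then both averages of these against the \emph{same} conditional law of $U$ given $(S=s,Z=z,A=a')$. On the bridge-function side: $q(W,a,S)$ depends only on $(W,a,S)$, so Assumption~\ref{ass: proximal}(b) (which gives $W\indep(Z,A)\mid(S,U)$, hence $W\indep A\mid(S,U,Z)$ and $W\indep Z\mid(S,U)$) lets us drop both $Z$ and $A$, yielding $\EE[q(W,a,S)\mid S,U,Z,A=a'] = \EE[q(W,a,S)\mid S,U] = \EE[q(W,a,S)\mid S,U,A=a]$; the outcome-bridge equation \eqref{def: outcome bridge} converts this to $\EE[R\mid S,U,A=a]$, which equals $\EE[R(a)\mid S,U,A=a]$ by consistency and $\EE[R(a)\mid S,U]$ by latent unconfoundedness (Assumption~\ref{ass: proximal}(c)). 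On the potential-outcome side: we need $\EE[R(a)\mid S,U,Z,A=a'] = \EE[R(a)\mid S,U]$, i.e.\ the cross-world relation $R(a)\indep(Z,A)\mid(S,U)$, which encodes that the action proxy $Z$ is a treatment-inducing variable with no causal effect on $\{R(a)\}_{a\in\calA}$; this I would extract from Assumption~\ref{ass: proximal}(a) and (c) under the paper's nonparametric structural model. Matching the two expressions then yields \eqref{eq:prox-Z-key}.

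Finally, for the optimal policy, the value formula can be rewritten as $\calV(\nu) = \EE\big[\sum_{a\in\calA}\EE[q(W,a,S)\mid S,A,Z]\,\nu(a\mid S,A,Z)\big]$, and for each fixed $(s,z,a')$ the quantity $\sum_{a\in\calA}\EE[q(W,a,S)\mid S=s,Z=z,A=a']\,\nu(a\mid s,z,a')$ is a convex combination of the numbers $\{\EE[q(W,a,S)\mid S=s,Z=z,A=a']\}_{a\in\calA}$, so it is maximized — uniquely, under the assumed uniqueness of the maximizer — by placing all mass on $\argmax_{a\in\calA}\EE[q(W,a,S)\mid S=s,Z=z,A=a']$; this is precisely \eqref{def: super policy with Z}, and in view of \eqref{eq:prox-Z-key} it agrees with the target policy \eqref{def: super policy contextual}. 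I expect the main obstacle to be the potential-outcome side above: justifying that $Z$ may be inserted into the conditioning without reintroducing confounding, i.e.\ the cross-world independence $R(a)\indep(Z,A)\mid(S,U)$, which has to be read off from Assumption~\ref{ass: proximal} as a whole rather than from any single one of its parts.
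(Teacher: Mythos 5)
Your proposal is correct and takes essentially the same route the paper does: the paper offers no separate proof of this corollary, treating it as the verbatim extension of Lemma \ref{lm: Ra_identification.} with $Z$ carried through every conditioning step, which is exactly your reduction to the identity $\EE[R(a)\mid S,Z,A=a'] = \EE[q(W,a,S)\mid S,Z,A=a']$ followed by the tower property and pointwise maximization over the convex weights $\nu(\cdot\mid s,z,a')$. The one delicate point you flag, the cross-world independence $R(a)\indep (Z,A)\mid (S,U)$ needed on the potential-outcome side, is used implicitly by the paper as well (it is the standard reading of the pre-treatment action proxy $Z$ under Assumption \ref{ass: proximal}), so your explicit acknowledgment of it is apt rather than a gap in your argument.
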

It can be seen from Corollary \ref{cor: sup-policy on Z} that to identify the super-policy, it remains to estimate the bridge function $q$ defined in Assumption \ref{ass: proximal}(d). One can impose the following completeness condition to consistently estimate it. 
\begin{assumption}[Completeness]\label{ass: completeness contextual}
    (a) For any squared-integrable function $g$ and for any $(s, a) \in \calS \times \calA$, $\EE[g(U) | Z, S = s, A = a] = 0$ almost surely if and only if $g(U) = 0$ almost surely. \\
    (b) For any squared-integrable function $g$ and for any $(s, a) \in \calS \times \calA$, $\EE[g(Z) | W, S = s, A = a] = 0$ almost surely if and only if $g(Z) = 0$ almost surely.
\end{assumption}

Completeness is a technical assumption commonly adopted in value identification problems. It can be satisfied by a wide range of models and distributions in statistics and econometrics \citep[e.g.][]{d2011completeness, chen2014local,tchetgen2020introduction}. For (a), it indicates that $Z$ should have sufficient variability compared to the variability of $U$, which helps to make \eqref{eqn: linear integral equation contextual} in the following hold when replacing  $Z$  with $U$. The condition (b) is mainly proposed for ensuring the existence of solution for \eqref{eqn: linear integral equation contextual}. 

\begin{lemma}
\label{lm: solve q contextual}
 Under Assumptions \ref{ass: proximal}-\ref{ass: completeness contextual} and some regularity conditions (see Assumption S1 
 in {Section S2}), solving the following linear integral equation
 \begin{align}\label{eqn: linear integral equation contextual}
     \EE\left[q(W, a, S) | Z, S, A = a \right] = \EE\left[R | Z, S, A = a \right],
 \end{align}
 for every $a \in \calA$ with respect to $q$ gives a valid bridge function that satisfies Assumption \ref{ass: proximal}(d).
\end{lemma}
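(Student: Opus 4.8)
The plan is to show that any solution $q$ of the observable integral equation \eqref{eqn: linear integral equation contextual} automatically satisfies the structural requirement \eqref{def: outcome bridge} in Assumption \ref{ass: proximal}(d). First I would fix an action $a \in \calA$ and suppose $q$ solves $\EE[q(W,a,S)\mid Z, S, A = a] = \EE[R \mid Z, S, A=a]$ almost surely. The idea is to condition both sides on $U$ in addition to $Z, S$ and then exploit the conditional independence structure to remove the dependence on $Z$. Concretely, define the residual $\Delta(U,S) \triangleq \EE[q(W,a,S)\mid U, S, A=a] - \EE[R\mid U,S,A=a]$; the goal is to prove $\Delta \equiv 0$ almost surely, which is exactly \eqref{def: outcome bridge}.

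The key steps, in order, are as follows. (1) Using Assumption \ref{ass: proximal}(b), $W \indep (Z,A)\mid (S,U)$, so that $\EE[q(W,a,S)\mid Z, S, U, A=a] = \EE[q(W,a,S)\mid S, U, A=a]$; similarly, by Assumption \ref{ass: proximal}(a), $R \indep Z \mid (S,U,A)$, giving $\EE[R\mid Z,S,U,A=a] = \EE[R\mid S,U,A=a]$. (2) Apply the tower property: take the observable equation \eqref{eqn: linear integral equation contextual}, condition further on $U$ inside the conditional expectation given $(Z,S,A=a)$, i.e.\ write $\EE[\,\cdot\mid Z,S,A=a] = \EE\big[\EE[\,\cdot\mid Z,S,U,A=a]\,\big|\,Z,S,A=a\big]$. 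Combining with step (1), the equation becomes
$$
\EE\big[\EE[q(W,a,S)\mid S,U,A=a] \,\big|\, Z, S, A=a\big] = \EE\big[\EE[R\mid S,U,A=a]\,\big|\, Z,S,A=a\big],
$$
hence $\EE[\Delta(U,S)\mid Z, S, A=a] = 0$ almost surely. (3) Invoke the completeness condition Assumption \ref{ass: completeness contextual}(a), which says that $\EE[g(U)\mid Z, S=s, A=a]=0$ a.s.\ forces $g(U)=0$ a.s.; applying it with $g(U) = \Delta(U,s)$ for each fixed $s$ yields $\Delta(U,S)=0$ almost surely, i.e.\ $\EE[q(W,a,S)\mid U,S,A=a] = \EE[R\mid U,S,A=a]$, which is \eqref{def: outcome bridge}. (4) Separately, argue existence of a solution to \eqref{eqn: linear integral equation contextual}: this is where the regularity conditions (Assumption \ref{ass:regularity_contextual}) and completeness part (b) enter, guaranteeing via a Picard-type / Fredholm solvability criterion (as in \cite{miao2018identifying}) that the linear integral equation admits at least one square-integrable solution $q$.

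I expect the main obstacle to be step (4), the existence half: verifying that the conditional-expectation operator $Z \mapsto \EE[q(W,\cdot,\cdot)\mid Z,\cdot]$ has the right range so that the equation is solvable requires the regularity/compactness conditions deferred to the appendix, and one must check that $\EE[R\mid Z,S,A=a]$ lies in the closure of the operator's range — this is the standard but delicate ill-posed-inverse-problem argument. The identification direction (steps 1--3) is comparatively routine once the conditional independences are used carefully; the only subtlety there is bookkeeping the ``$A=a$'' conditioning together with the latent unconfoundedness Assumption \ref{ass: proximal}(c) when one later connects $\EE[R\mid U,S,A=a]$ to the counterfactual mean $\EE[R(a)\mid U,S]$, though that connection is used in Lemma \ref{lm: Ra_identification.} rather than here.
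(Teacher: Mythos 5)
Your proposal is correct and follows essentially the same route as the paper's proof: tower the observable moment condition over $U$, use Assumptions \ref{ass: proximal}(a)--(b) to drop $Z$ from the inner conditional expectation, invoke the completeness condition \ref{ass: completeness contextual}(a) to conclude \eqref{def: outcome bridge}, and defer existence of a solution to the regularity conditions via Proposition 1 of \cite{miao2018identifying}. If anything, your bookkeeping is slightly more careful than the paper's, which attributes the removal of $Z$ only to Assumption \ref{ass: proximal}(b) even though part (a) is also needed for the reward term.
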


Built upon Corollary \ref{cor: sup-policy on Z} and Lemma \ref{lm: solve q contextual}, 
we can estimate the bridge function $q$ using the observed data and therefore obtain an estimation of super policy $\nu^*$. See Section \ref{sec:est_contextual}  for more details and the proposed algorithm. 


\subsection{\revv{Identification in Sequential Decision Making}}
\label{sec:identification_DTR}
Next, we discuss how to identify $\nu^\ast$ in the sequential setting. 
In the following,  
{we \revv{introduce} two approaches to identify policy values with the help of some proxy variables. Both are motivated by the recent development of identifying off-policy value in the confounded POMDPs \citep{bennett2021proximal,tennenholtz2020off,shi2021minimax,Miao2022}.
One approach requires an additional proxy variable which is independent of the past action at each decision point (Section \ref{sec:fh1}) 
and an efficient learning algorithm can be developed under the memoryless assumption (Section \ref{sec:est_fh}). The other approach, which does not need the proxy variables, builds on the current data generating process 
but with the loss of an efficient learning algorithm. Due to the space limitation, we present the second approach in {Section  S3}. 
}
 

\subsubsection{\revv{Identification via Q-bridge Functions}}
\label{sec:fh1}
In this subsection, we develop the framework for the identification of super-policies in the confounded sequential decision making settings via $Q$-bridge functions. 
Note that in our policy class $\Omega$,  the policy $\nu_t$ depends on two sets of actions where one set of actions is induced by the policy $\{\nu_{t'}\}_{t'=1}^{t-1}$ and the other set of actions is generated by the behavoir policy $\{\pi^b_{t'}\}_{t'=1}^{t}$. To distinguish two sources of actions, in the sequel, we use $A_t$ to represent the observed action (the action taken by the behavior agent) and $A^\nu_t$ to represent the action induced by $\nu$.  
For notation simplicity, in the following, we define $Z_t = \{O_{1:t}, A^\nu_{1:t-1}\} \in \mathcal{Z}_t$, where $\mathcal{Z}_t: = \mathcal{O}_{1:t} \times \calA_{1:t-1}$. To start with, we make the following assumptions.
\begin{assumption}
\label{ass:basic_fh}
    There exists a sequence of reward proxy variables $\{W_t\}_{t=1}^T$ such that 
\label{ass:RewardProxy} 
$W_t\indep A_t \mid (U_{1:t}, O_{1:t}, A_{1:t-1})$ and $W_t\dep U_{1:t} | (O_{1:t}, A_{1:t-1})$,  for $1\le t\le T$. 
\end{assumption}
Assumption \ref{ass:basic_fh} requires the existence of a sequence of proxy variables $\{W_t\}_{t=1}^T$, which are not affected by the previous action but correlated with the past hidden information. 
If one does not wish to assume an additional proxy variable, alternatively one may set $W_t$ as $O_t$ and take $Z_t = \{O_{1:t-1}, A^\nu_{1:t-1}\} \in \calZ_t$. Then Assumption \ref{ass:basic_fh} automatically holds under the current problem setting. However, in this case, the super policy $\nu_t$ needs to be defined over $\mathcal{O}_{1:t-1} \times \calA_{1:t-1} \times \calA_{1:t}$, which excludes the current observation $O_t$. In the following discussion, we  assume the existence of $\{W_t\}_{t=1}^T$. To identify $\calV(\nu)$ and ultimately $\nu^\ast$ under unmeasured confounding, we assume the existence of a class of $Q$-bridge functions below. Similar to the discussion in Section \ref{sec:identification_contextual_bandits}, the existence of such brdige functions can be satisfied under some completeness and regularity assumptions (see Assumptions S2 and S3  ).
\begin{assumption}
    \label{ass:q_existence}
  \revv{For any policy $\nu \in \Omega$,}  there exists a class of $Q$-bridge functions $\{\qnu_t\}_{t=1}^T$, where $\qnu_t$ is defined over $\calW \times \calZ_t \times \calA_{1:t} \times \calA$ for $t = 1,\dots, T-1$, such that for every $(u_{1:t}, o_{1:t}, a_{1:t}) \in \calU_{1:t} \times \mathcal{O}_{1:t} \times \calA_{1:t}$ and every $a_{1:t} \in \calA_{1:t}$, 
    {\small \begin{equation}
    \label{eqn:Q-bridge-1}
    \textstyle \EE^{\nu}\left[\sum_{t'=t}^T R_{t'} | U_{1:t}, O_{1:t}, A^\nu_{1:t-1}, A_{1:t} = a_{1:t}\right] = \EE\left[\sum_{a\in\calA} \qnu_t(W_t,Z_t, a_{1:t}, a)\nu_t(a\mid Z_t, a_{1:t}) |  U_{1:t}, O_{1:t}, A^\nu_{1:t-1} \right],
    \end{equation}
    }
where the Q-bridge function of the last step $q^\nu_T$ defined over over $\calW \times \calZ_T  \times \calA$ satisfies
    {\small \begin{equation}
        \label{eqn:Q-bridge-2}
        \textstyle \EE^{\nu}\left[R_T | U_{1:T}, O_{1:T}, A^\nu_{1:T-1}, A_{1:T} = a_{1:T}\right] = \EE\left[\sum_{a\in\calA} \qnu_T(W_T, Z_{t}, a)\nu_t(a\mid Z_T, a_{1:T}) |  U_{1:T}, O_{1:T}, A^\nu_{1:T-1} \right].
        \end{equation}
        }
\end{assumption}
\rev{
Similar to Section \ref{sec:identification_contextual_bandits}, we require the following basic assumption to hold in the setting of sequential decision making. 
\begin{assumption}
    \label{ass:consistency-long}
\rev{(a) Consistency: $R_t = R_t(A)$, $t=1,\dots, T$. (b) Positivity: and $P(A_t = a \mid O_{1:t}, A_{1:{t-1}}, O_o) > 0$ for any $a \in \calA$, $t=1,\dots, T$.}
\end{assumption}
}

Let $O_0$ denote some pre-collected observation before the decision process initiates. We impose the following additional assumption for $O_0$.
\begin{assumption}
    \label{ass:POMDP_more}
       $(R_t, W_{t+1}, W_t, O_{t+1}, U_{t+1}) \ind  O_0 | U_{1:t}, O_{1:t}, A_{1:t}$, for $1\leq t \leq T-1$.
\end{assumption}
The role of $O_0$ is similar to the role of the action proxy $Z$ in the setting of contexual bandits. Given all the history and current state variables and actions, we assume it to be independent of the current reward ($R_t$) and the current and future proxy variables ($W_t$ and $W_{t+1}$). In practice,  state variables at the next step ($U_{t+1}, O_{t+1}$) often solely depend on the history and current state variables and actions, leading to a natural independence between $O_0$ and $(U_{t+1}, O_{t+1})$.

Given aforementioned assumptions,  we have the following identification result.

\begin{theorem}\label{thm:POMDP2_existence2}
Suppose Assumptions \ref{ass:basic_fh}-\ref{ass:POMDP_more}, and certain completeness and regularity conditions (Assumptions S2 and S3) 
in {Section S3}  hold. 
Let $\qnu_{T+1}=0$.
At time $T$,
the Q-brige function $q^\nu_{T}(\cdot, \cdot, \cdot)$ is obtained via solving
\begin{align}
  \label{eq:POMDP-identification-T}
  \EE \left\{\qnu_T(W_T, (O_{1:T}, A_{1:T-1}),  A_T) - R_T  |   (O_{1:T}, A_{1:T-1}), O_0, A_T \right\} =0.
\end{align}
From $t= T-1, \dots, 1$, the Q-bridge function$q_t$ can be obtained via solving the following linear integral equation. In particular, for any $a_{1:t}  \in  {\calA}_{1:t}$,  $\qnu_t(\cdot, \cdot, a_{1:t}, \cdot)$   
is the solution to 
{\small 
\begin{multline}
  \label{eq:POMDP-identification2}
  \EE \left\{\qnu_t(W_t, (O_{1:t}, A_{1:t-1}), {a}_{1:t},  A_t) - R_t  - \right. \\
  \left. \sum_{a\in \calA}\qnu_{t+1}(W_{t+1},(O_{1:t+1}, A_{1:t}), ({a}_{1:t},  A_{t+1}), a )\nu_{t+1}(a\mid (O_{1:t+1}, A_{1:t}), ({a}_{1:t},    A_{t+1}))|   (O_{1:t}, A_{1:t-1}), O_0, A_t \right\} =0,  
\end{multline}
}
Then the policy value  of $\nu \in   \Omega$ can be identified as
\begin{align}\label{eqn: history dependent policy value 2}
\calV(\nu) = \EE\left[\sum_{a\in \calA}\qnu_1(W_1, O_1, A_1, a) \nu_1(a \mid   O_1, A_1)\right].
\end{align}
\end{theorem}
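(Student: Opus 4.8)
The plan is a backward induction over the decision points $t=T,T-1,\dots,1$, treating each step as a sequential instance of the proximal identification argument behind Lemma~\ref{lm: solve q contextual}. The organizing principle is that the \emph{environment} --- the transition kernels $\{\calP_t\}$, the conditional-mean rewards, and the conditional laws generating the proxies $\{W_t\}$ and the observations $\{O_t,O_0\}$, all given the latent/observed history and the \emph{executed} actions --- is common to the batch distribution and to the $\nu$-interventional distribution $\EE^\nu$; the two differ only in how the next action is drawn (by $\pi^b$ versus by $\nu$). Hence any conditional-moment identity extracted from the batch data and stated purely in terms of these shared conditionals remains valid under $\EE^\nu$ once the data's action symbols $A_{1:t}$ are reinterpreted as the executed actions $A^\nu_{1:t}$.

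\emph{Step 1: each identification equation is solvable, and every solution is a valid $Q$-bridge function.} This is the sequential analogue of Lemma~\ref{lm: solve q contextual}. Solvability of the Fredholm equations~\eqref{eq:POMDP-identification-T} and~\eqref{eq:POMDP-identification2} is granted by the completeness and regularity conditions (Assumptions~\ref{ass:completeness2} and~\ref{ass:regularity2}) together with the recursive availability of $\qnu_{t+1}$. For validity, fix $t$ and a solution $h=\qnu_t$, and apply the tower property to the corresponding equation while additionally conditioning on $U_{1:t}$. By Assumption~\ref{ass:POMDP_more}, each of $R_t,W_t,W_{t+1},O_{t+1},U_{t+1}$ is independent of $O_0$ given $(U_{1:t},O_{1:t},A_{1:t})$, so the inner conditional expectation does not involve $O_0$; denote it $g(U_{1:t},O_{1:t},A_{1:t})$. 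What remains is $\EE[g(U_{1:t},O_{1:t},A_{1:t})\mid(O_{1:t},A_{1:t-1}),O_0,A_t]=0$, and the sequential completeness condition (the counterpart of Assumption~\ref{ass: completeness contextual}(a), with $O_0$ in the role of the action proxy $Z$) forces $g\equiv 0$ almost surely. Finally $W_t\ind A_t\mid(U_{1:t},O_{1:t},A_{1:t-1})$ from Assumption~\ref{ass:basic_fh} lets us drop $A_t$ from the conditioning of the $h$-term, which produces the latent moment identity: $\EE[h(W_T,Z_T,a)\mid U_{1:T},O_{1:T},A_{1:T-1}]=\EE[R_T\mid U_{1:T},O_{1:T},A_{1:T-1},A_T=a]$ at $t=T$, and $\EE[h(W_t,Z_t,a_{1:t},a)\mid U_{1:t},O_{1:t},A_{1:t-1}]=\EE[R_t+\sum_{a'}\qnu_{t+1}(W_{t+1},(O_{1:t+1},A_{1:t}),(a_{1:t},A_{t+1}),a')\nu_{t+1}(a'\mid\cdot)\mid U_{1:t},O_{1:t},A_{1:t-1},A_t=a]$ for $t<T$.

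\emph{Step 2: backward induction to the interventional cumulative reward, then to the value.} I claim that for every $t$ and every $a_{1:t}$ the solution $\qnu_t$ satisfies~\eqref{eqn:Q-bridge-1} and~\eqref{eqn:Q-bridge-2} of Assumption~\ref{ass:q_existence}, i.e., $\EE^\nu[\sum_{t'=t}^T R_{t'}\mid U_{1:t},O_{1:t},A^\nu_{1:t-1},A_{1:t}=a_{1:t}]=\EE[\sum_a\qnu_t(W_t,Z_t,a_{1:t},a)\nu_t(a\mid Z_t,a_{1:t})\mid U_{1:t},O_{1:t},A^\nu_{1:t-1}]$. The base case $t=T$ comes from multiplying the $t=T$ identity of Step~1 by $\nu_T(a\mid Z_T,a_{1:T})$ and summing over $a$, then noting that $\EE[R_T\mid U_{1:T},O_{1:T},A_{1:T-1},A_T=a]$ is the shared conditional-mean reward with executed action $a$ and that, under $\EE^\nu$, $A^\nu_T\sim\nu_T(\cdot\mid Z_T,a_{1:T})$ while $R_T$ is conditionally independent of the recommendations given $(U_{1:T},O_{1:T})$ and the executed actions. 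For the inductive step, decompose $\sum_{t'\ge t}R_{t'}=R_t+\sum_{t'\ge t+1}R_{t'}$; the $t$-level identity of Step~1 disposes of $R_t$, and for the tail one conditions down through the one-step transition $(U_{1:t},O_{1:t},A^\nu_{1:t})\mapsto(U_{t+1},O_{t+1})$ and the next recommendation $A_{t+1}$, invokes the induction hypothesis at $t+1$, and again uses policy-independence of $\calP_t$ to equate the batch-side and $\EE^\nu$-side conditional expectations. Evaluating the claim at $t=1$ (where $Z_1=O_1$), then averaging over the first recommendation $A_1\sim\pi^b_1(\cdot\mid U_1)$ and over $(O_1,U_1)$, and using $W_1\ind A_1\mid(U_1,O_1)$ to re-insert the conditioning on $A_1$, collapses $\calV(\nu)=\EE[V^\nu_1(O_1,U_1)]$ by the tower property over the full batch distribution to $\EE[\sum_a\qnu_1(W_1,O_1,A_1,a)\nu_1(a\mid O_1,A_1)]$, which is~\eqref{eqn: history dependent policy value 2}.

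\emph{Main obstacle.} The crux is the bookkeeping that interfaces Steps~1 and~2: one must scrupulously separate the two action streams --- the behavior recommendations $A_{1:t}$, which enter $\nu_t$ only through its choice probabilities, and the executed actions $A^\nu_{1:t}$, which drive the dynamics and rewards --- and check at each conditioning step that the law being used (of $W_t$, of $R_t$, or of the next latent/observed pair) is the shared, policy-independent one, so that equalities proven in the batch world transfer verbatim to $\EE^\nu$. A secondary delicate point is applying the completeness condition in Step~1 to the \emph{correct} conditional expectation, namely the one left after $O_0$ has been removed via Assumption~\ref{ass:POMDP_more}; verifying that Assumption~\ref{ass:POMDP_more} covers every term ($R_t,W_t,W_{t+1},O_{t+1},U_{t+1}$) appearing inside the moment is exactly what dictates its stated form.
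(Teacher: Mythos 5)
Your proposal is correct and follows essentially the same route as the paper: remove $O_0$ from the conditional moment via Assumption \ref{ass:POMDP_more}, invoke the completeness condition to obtain the latent ($U_{1:t}$-conditioned) moment identity, use $W_t\ind A_t$ and the policy-invariance of the shared dynamics to pass from the batch to the $\EE^\nu$ side, and then run backward induction from $t=T$ down to $t=1$ before averaging over the initial distribution and behavior recommendation, with solvability handled by the completeness/regularity conditions (the paper verifies this via the compact-operator/singular-system argument in its Part III). No substantive differences or gaps.
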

In Assumption \ref{ass:q_existence}, different from the definition of $q_t^\nu$ for $t<T$, $q^\nu_T$ does not depend on the actions from actions $a_{1:T}$ produced from the behavior policy. The intuitive reason behind it is that at the last step $T$, as long as all the state variables $U_{1:T}, O_{1:T}$ and actions $A_{1:T}^\nu$ produced by the policy $\nu$ are conditioned, the reward $R_T$ does not depend on the actions produces by the behavior policy. However, for $t<T$, the later rewards $R_{t+1}, \dots, R_T$ that produced by the policy $\nu$ will depend on the policies $\nu_{t+1}, \dots, \nu_T$, and therefore actions $a_{1:t}$ produced by the behavior policy, even though conditioning on $U_{1:t}, O_{1:t}$ and $A_{1:t}^\nu$.

\section{Estimations and Algorithms}
\label{sec:est}

In this section, we introduce our super-policy learning algorithms based on the identification results developed in Section \ref{sec:identification_contextual_bandits} and Section \ref{sec:identification_DTR},  and provide corresponding estimation procedures.

\subsection{Confounded Contextual Bandits: Algorithm Development}
\label{sec:est_contextual}
In Algorithm \ref{alg:contextual2}, we summarize the steps in learning the super policy using the observed data for contextual bandits. 
The first key step is to estimate the bridge function $q$ by the linear integral equation stated in Lemma \ref{lm: solve q contextual}.  The second key step is to estimate the projection term $\EE [\hat q(W,S,a) | S =s , Z= z, A = a']$ for every $a \in \calA$, using the estimated bridge function $\hat q$.

\begin{algorithm}[t]
\SetAlgoLined
\textbf{Input:} Data $\mathcal{D} = (S_i, Z_i, A_i, R_i, W_i)_{i=1}^n$. \\
Obtain the estimation of the bridge function $\hat q$ by solving the estimation equation
\rev{\eqref{eqn:qest-contextual}}
using data $\mathcal{D}$\\
Implement any supervised learning method for estimating $\EE\left[\hat q(W, S, a) | S, Z, A\right]$.\\
Compute $a^\ast = \argmax_{a \in \calA}\hat \EE\left[\hat q(W, S,a) | S = s, Z = z, A = a'\right] \quad \forall (s, z, a') \in \calS \times \calZ \times  \calA$.\\
\textbf{Output:} $\hat {\nu}^\ast$ with $\hat {\nu}^\ast(a^\ast | s, z, a') = 1$ and $\hat {\nu}^\ast(\tilde a | s, z, a') = 0$ for $\tilde a \neq a^\ast$.
\caption{Learning Algorithm for the contextual bandits under unmeasured confounding}\label{alg:contextual2}
\end{algorithm}

When $\calS \times \calZ \times \calA \times \calW$ are all finite and discrete, the bridge function and the projection term  can be straightforwardly estimated via empirical average. In the following, we 
focus on the case where the function approximation is needed. Specifically, we adopt the procedure for solving the conditional moment estimation procedure in \cite{dikkala2020minimax}, and propose to estimate 
$Q$-bridge function by 
    \begin{align}
    \label{eqn:qest-contextual}
        \hat{q} := \argmin_{q' \in \calQ} \left\{ \sup_{g\in \calG} \tilde{\Psi}(q', g) - \lambda\left(\|g\|^2_{\calG} + \frac{U}{\Delta^2} \|g\|_{2,n}^2 \right) \right\} + \lambda\mu \|q'\|_{\calQ}^2,
    \end{align}
    where $\tilde{\Psi}(q',g) = \frac{1}{n}\sum_{i=1}^n \left\{ q'(W_i, S_i, A_i) - R_i\right\}g(Z_i, S_i, A_i)$; $\|g\|_{2,n}^2 = n^{-1} \sum_{i=1}^n g^2(Z_i, S_i, A_i)$ for $g \in \calG$; $\calQ$ is the imposed model for $q$ (the solution of \eqref{eqn: linear integral equation contextual})
    ; $\calG$ is the function space where the test functions $g$ come from. In addition, $\lambda, \mu, \Delta, U>0$ are some tuning parameters. The motivation behind \eqref{eqn:qest-contextual} is  that 
when $\lambda$, $\lambda\mu \rightarrow 0$ and $\lambda U /\Delta \asymp 1$, the solution of the following population-version min-max optimization problem
\begin{align*}
    \argmin_{q' \in \calQ}  \sup_{g\in \calG} \EE\left\{[q'(W, S, A) - R] g(Z, S, A) - \frac{1}{2} g^2 (Z, S, A)\right\}
\end{align*}
is equivalent to the solution of 
following optimization problem 
\begin{align*}
    \argmin_{q' \in \calQ} \EE \left\{ \EE[q'(W, S, A) - R | Z, S, A ]^2\right\},
\end{align*}
when the space $\calG$ of testing functions is rich enough.

In practice, spaces $\calQ$ and $\calG$ are user-specified. To increase flexibility, $\calQ$ and $\calV$ can be implemented using growing linear sieves, reproducing kernel Hilbert spaces (RKHSs) and deep neural networks. When $\calQ$ and $\calG$ are taken as RKHSs, the optimization seems infinite-dimensional. However, due to the well-known representer theorem,  one can show that there exists a closed-form solution that lies in a finite-dimensional space. 
 For more information on deriving the closed-form solution, as well as guidance on hyper-parameter tuning and strategies to improve computational efficiency, we refer readers to Section E.3 of \cite{dikkala2020minimax}.

Meanwhile, the conditional moment framework can be  adopted to obtain the projection term. 
Here, we propose to perform the estimation via the empirical risk minimization: 
  \begin{align}
        \label{eqn:proj-contextual}
        \hat g(\cdot, \cdot, \cdot\ ;\ \hat q(\cdot, \cdot, a)) : = \argmin_{g \in \calG}\frac{1}{n}\sum_{i=1}^n \left[ g(S_i, Z_i, A_i) - \hat q(\cdot, \cdot, a)\right]^2 + \mu' \|g\|_{\calG}^2,
    \end{align}
    where $\hat q$ is defined in \eqref{eqn:qest-contextual} and $\mu'>0 $ is a tuning parameter. Similarly, one can take the pre-specified space $\calG$ in \eqref{eqn:proj-contextual} as growing linear sieves and RKHSs, which result in typical penalized spline regression and kernel ridge regression respectively.

\subsection{Confounded Sequential Decision Making:  Algorithm Development}
\label{sec:est_fh}
Given the identification results in Theorems \ref{thm:POMDP2_existence2} (or Theorem 
S1), to obtain the super-policy $\nu^\ast$, one 
solution is to directly search the optimal policy over the space of super-policies that maximize the estimated value, i.e.,
$$
    \hat \nu = \argmax_{\nu \in \Omega} \widehat{\calV}(\nu),
$$
where $\widehat{\calV}(\nu)$ is obtained by iteratively estimating $q_t^\nu$ through \eqref{eq:POMDP-identification2} (or $b^\nu_t$ through 
(S8)
) from $t = T$ to $t = 1$ with fixed $\nu$. 

However, when $T$ is large and models imposed for estimating bridge functions are complex (e.g., deep neural networks), direct optimizing $\widehat{\calV}(\nu)$ requires extensive computational power. 
Therefore, we restrict our focus to a special case of the sequential setting described in Section \ref{sec:fh1}, under which a more practical algorithm with theoretical guarantee can be derived. 
We leave the development of efficient algorithms under general settings as future work. Motivated by Theorem \ref{thm:POMDP2_existence2}, we propose a fitted-Q-iteration (Q-learning) type algorithm (Algorithm \ref{alg:fh2}) 
for practical implementation. In particular, Algorithm \ref{alg:fh2} has the theoretical guarantee (which we will discuss in {Section S6}
under the following memoryless assumption. 


\begin{assumption}[Memoryless Unmeasured Confounding]
\label{ass: memoryless}
For $2 \leq t \leq T$, 
$W_t \indep U_{1:(t-1)} | (O_{1:t}, A_{1:t})$,  $A_t \indep A_{1:t-1} | (U_{1:t},  O_{1:t})$. At the last step $T$, we additionally assume $(U_T, W_T) \indep A_{1:T-1} | (U_{1:T-1}, O_{1:T})$.
\end{assumption}
The memoryless assumption plays an important role in deriving the algorithm wherein policies are learned sequentially, starting from the last step and working backward. Similar conditions have been commonly imposed in the literature to handle unmeasured confounding in a sequential setting \citep{kallus2020confounding,Fu2022,shi2022off, xu2023instrumental}. Mainly, it ensures that the projection step guarantees the optimality under the distributions regarding to both the behavior policy and the induced policy.

\begin{algorithm}[h]
\caption{Super RL for the confounded POMDP}
\label{alg:fh2}
\SetAlgoLined
\textbf{Input:} Data $\mathcal{D} = \{\calD_t\}_{t=1}^{T-1}$ with $\calD_t = \{(O_{i,1:t}, A_{i,1:t}, R_{i,t}, W_{i,t}, O_{i,t+1}, A_{i,t+1}, W_{i,t+1})\}_{i=1}^n$. \\
Let $\hat q_{T+1}=0$ and $\hat \nu^\ast_T$ be an arbitrary policy.\\
At time $T$, obtain $\hat q_T$ for $q_T$ by solving \eqref{eq:POMDP-identification-T}. Compute $\hat \EE[\hat q_T(W_T,(O_{1:T}, a^\nu_{1:T-1}), A_T, a^\nu) | O_{1:T}=o_{1:T},  A_{1:T} = a_{1:T}]$ for $a^\nu\in \calA$ and $a^\nu_{1:T-1} \in \calA_{1:T-1}$ using any supervised learning method and obtain the estimated super policy $\hat \nu^\ast_T$ as 
$\hat {\nu}^\ast_T(a^\ast | o_{1:T},a^\nu_{1:T-1}, a_{1:T}) = \mathbbm{1}\left\{\argmax_{a \in \calA} \hat \EE[\hat q_T(W_T,(O_{1:T}, a^\nu_{1:T-1}), a) | O_{1:T}=o_{1:T},  A_{1:T} = a_{1:T}] \right\}$  for any $a^\nu_{1:T-1} \in \calA_{1:T-1}$.
\\
Repeat for $t=T-1,\dots,1$:\\
\Indp
Obtain an estimator $\hat q_t$ for $q_t$ 
  by solving \eqref{eq:POMDP-identification2} using data $\mathcal{D}_t$ and $\hat q_{t+1}$ obtained from the last iteration. \\  
Compute $\hat \EE[\hat q_t(W_t,(O_{1:t}, A_{1:t-1}), (a_{1:t-1}, A_t), a) | O_{1:t},  A_{1:t}]$ for $a\in \calA$ and $a_{1:t-1} \in \calA_{1:t-1}$ using any supervised learning method and obtain the estimated super policy $\hat \nu^\ast_t$ as 
$\hat {\nu}^\ast_t(a^\ast | o_{1:t},a^\nu_{1:t-1}, (a_{1:t-1}, a_t)) = \mathbbm{1}\left\{\argmax_{a \in \calA} \hat \EE[\hat q_t(W_t,(O_{1:t}, A_{1:t-1}), (a_{1:t-1}, A_t), a) | O_{1:t}=o_{1:t},  A_{1:t-1} = a^\nu_{1:t-1}, A_t = a_t] \right\}$ for any $a_{1:t-1} \in \calA_{1:t-1}$.\\
\Indm
\textbf{Output:} $\hat \nu^\ast = \{\hat {\nu}^\ast_t\}_{t=1}^T$.
\end{algorithm}



In Algorithm \ref{alg:fh2}, the iteration is conducted from the final time step $t=T$ to the first time step $t=1$. At each iteration $t$, there are two main steps. One is to estimate the  $Q$-bridge function $q_t$ and the other is to perform the projection. 
We take similar procedures as described in Section \ref{sec:est_contextual} to perform these two steps. 
In the step of estimating the $Q$-bridge function, we follow the construction in \cite{dikkala2020minimax} to derive the estimators. We construct the objective function for the $Q$-bridge function at the last step $T$ based on \eqref{eq:POMDP-identification-T}. 
For the steps $t <T$ and for every combination of $a_{1:t}$,  we construct the objective function based on \eqref{eq:POMDP-identification2} to learn the $Q$-bridge function $q^\nu_t$, where we replace $\nu_{t+1}$ and $q^\nu_{t+1}$ with the estimated ones ($\hat{\nu}_{t+1}$ and $\hat{q}^\nu_{t+1}$ ) obtained from the previous step.  
For the projection step, it performs differently at step $T$ and steps $t< T$ as shown in Line 3 and Line 6 in Algorithm \ref{alg:fh2}. 
In particular, at the last step $T$, the dependence of the policy on actions produced by the behavior agents ($a_{1:T}$) is done by conditioning the last step $Q$-bridge function on the observed actions $A_{1:T}$. At the steps $t<T$, the dependence of the policy on $a_{1:t-1}$ is directly through the input $a_{1:t-1}$ of the $Q$-bridge function $q^\nu_t$; the dependence on $a_t$ is through conditioning on the observed action $A_t$.  
A major reason for such difference in the projection steps is that the Q bridge function at the last step does not depend on the actions produced by the behavior actions $a_{1:T}$. The conditioning set $A_{1:T}$ in the projection step plays the role as the actions produced by the behavior policy,  and through doing this we could learn the policy at the last step that depends  not only on the previous taken actions but also on the actions produced by the behavior policy as well.  Same implementation procedures as discussed in Section \ref{sec:est_contextual} can be used. 
In {Section S8.1} 
, we list implementation details for these two steps.

\section{Super-policy Learning with Regret Guarantees}
\label{sec:regret}
In this section,  we establish the  finite-sample regret bounds for algorithms developed in Section \ref{sec:est}. In particular, we focus on deriving the finite-sample upper bound for the regret of finding the super-policy in both contextual and sequential settings. The regret of any generic policy $\widetilde \nu$ is defined as
\begin{align}\label{def: regret}
    \text{Regret}(\widetilde \nu) \equiv \calV(\nu^\ast) - \calV(\widetilde \nu).
\end{align}

Due to space constraints, we present the contextual bandits results in the main paper.  The regret bounds for the sequential setting are provided in {Section S6}. 
Specifically, we derive the regret bound for Algorithm \ref{alg:fh2} under the memoryless setting discussed in Section \ref{sec:est_fh}.

Let $\hat{\nu}^\ast$ denote the output of Algorithm \ref{alg:contextual2} 
which relies on the estimation of the bridge function $q$ given by \eqref{eqn: linear integral equation contextual}. Define the $\mathcal{L}_2$ norm of a generic function $f$ as $\|f\|_2 \equiv \sqrt{\EE [f^2]}$. 
Let $g(S, Z, A\ ; f) \equiv \EE[f (W,S) \mid S, Z, A]$ for any $f$ defined over $\calW \times \calS$. For a given projection estimator $\hat\EE$,  
let $\hat g(S, Z, A\ ; f) \equiv \hat\EE[f (W,S) \mid S, Z, A]$ denote the corresponding estimator. Let
 \begin{align*}
   p_{\max} = \sup_{u,s,z,a', \nu \in \Omega} \frac{\sum_{a\in \calA} \pi_b(A=a\mid U=u, S=s)\nu(A'=a'\mid Z=z,S=s,A=a)}{\pi_b(A'=a'\mid U=u, S=s)}.
\end{align*}
Define the projection error as
$\xi_n : =  \sup_{q \in \calQ, a\in \calA}   \left\| g[\cdot, \cdot, \cdot\ ; \ q(\cdot, \cdot, a) ] - \hat g[\cdot, \cdot, \cdot\ ;\  q(\cdot, \cdot, a) ]\right\|_{2} , $
and the bridge function estimation error as
$\zeta_n : = \left\| q - \hat q\right\|_{2}.$
The following Lemma shows that the regret bound can be controlled through bounding the $Q$ function estimation error and the projection estimation error. 
\begin{lemma}
\label{thm:regre_contextual}
Suppose $q$ belongs to the function class $\calQ\subset \calW \times \calS \times \calA$. 
Then we obtain the following regret decomposition
$$\text{Regret}(\hat{\nu}^\ast) \leq 2(\xi_n + p_{\max}\zeta_n).$$
\end{lemma}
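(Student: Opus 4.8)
The plan is to convert the regret, via the identification formula of Corollary~\ref{cor: sup-policy on Z}, into a comparison of projected bridge functions, then exploit that $\hat\nu^\ast$ is greedy, and finally bound the two remaining error sources by $\xi_n$ and $p_{\max}\zeta_n$ respectively. First I would use $\calV(\nu)=\EE\big[\sum_{a\in\calA}q(W,a,S)\,\nu(a\mid S,Z,A)\big]$ and take the conditional expectation given $(S,Z,A)$ (legitimate because both $\nu^\ast$ and $\hat\nu^\ast$ are measurable with respect to $(S,Z,A)$) to get
\[
\text{Regret}(\hat\nu^\ast)=\EE\Big[\textstyle\sum_{a\in\calA} g\big(S,Z,A;q(\cdot,a,\cdot)\big)\,\big(\nu^\ast(a\mid S,Z,A)-\hat\nu^\ast(a\mid S,Z,A)\big)\Big].
\]
Because $\hat\nu^\ast$ maximizes $\sum_a \hat g\big(S,Z,A;\hat q(\cdot,a,\cdot)\big)\nu(a\mid S,Z,A)$ pointwise over $\nu$ (Line~4 of Algorithm~\ref{alg:contextual2}), the corresponding sum with $\nu=\nu^\ast$ minus $\nu=\hat\nu^\ast$ is nonpositive, so subtracting its expectation yields
\[
\text{Regret}(\hat\nu^\ast)\le \EE\Big[\textstyle\sum_{a\in\calA}\big(g(S,Z,A;q(\cdot,a,\cdot))-\hat g(S,Z,A;\hat q(\cdot,a,\cdot))\big)\big(\nu^\ast(a\mid S,Z,A)-\hat\nu^\ast(a\mid S,Z,A)\big)\Big].
\]
Since $\nu^\ast$ and $\hat\nu^\ast$ are deterministic, the difference $\nu^\ast(\cdot\mid S,Z,A)-\hat\nu^\ast(\cdot\mid S,Z,A)$ is supported on at most two actions — the one selected by $\nu^\ast$ and the one selected by $\hat\nu^\ast$ — at which it equals $+1$ and $-1$; hence the right-hand side is at most $\sum_{\pi}\EE\big[\big|g(S,Z,A;q(\cdot,\pi,\cdot))-\hat g(S,Z,A;\hat q(\cdot,\pi,\cdot))\big|\big]$, the sum being over $\pi\in\{\nu^\ast,\hat\nu^\ast\}$ with $\pi$ now denoting the action $\pi(S,Z,A)$ it selects.

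For each of these two terms I would add and subtract $g(S,Z,A;\hat q(\cdot,\pi,\cdot))$ and use linearity of $f\mapsto g(\cdot,\cdot,\cdot;f)$ in its function argument, splitting it into a \emph{projection part} $\EE\big[\big|g(S,Z,A;\hat q(\cdot,\pi,\cdot))-\hat g(S,Z,A;\hat q(\cdot,\pi,\cdot))\big|\big]$ and a \emph{bridge part} $\EE\big[\big|g(S,Z,A;(q-\hat q)(\cdot,\pi,\cdot))\big|\big]$. The projection part is handled directly: $\hat q$ lies in $\calQ$ by construction, and for every $(S,Z,A)$ the argument $\hat q(\cdot,\pi(S,Z,A),\cdot)$ equals $\hat q(\cdot,a,\cdot)$ for some $a\in\calA$, so by $\EE|X|\le\|X\|_2$ and the definition of $\xi_n=\sup_{q'\in\calQ,\,a\in\calA}\big\|g(\cdot,\cdot,\cdot;q'(\cdot,\cdot,a))-\hat g(\cdot,\cdot,\cdot;q'(\cdot,\cdot,a))\big\|_2$ this part is controlled by $\xi_n$.

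The bridge part is where the real work lies. Writing $g(S,Z,A;(q-\hat q)(\cdot,\pi,\cdot))=\EE[(q-\hat q)(W,\pi(S,Z,A),S)\mid S,Z,A]$, it is at most $\EE\big[\big|(q-\hat q)(W,\pi(S,Z,A),S)\big|\big]$, so I must pass from evaluating $q-\hat q$ at the action selected by the (possibly estimated) super-policy to evaluating it at the behavior action $A$ that defines $\zeta_n=\|q-\hat q\|_2$. I would do this by conditioning on $(U,S)$ and using $W\indep(Z,A)\mid(S,U)$ from Assumption~\ref{ass: proximal}(b), so the conditional law of $W$ is unchanged, together with $A\indep Z\mid(S,U)$, which holds because the behavior policy depends only on $(S,U)$. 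For fixed $(u,s)$ the induced action distribution then satisfies $\PP(\pi(S,Z,A)=a'\mid U=u,S=s)=\EE\big[\sum_a\pi^b(a\mid u,s)\,\pi(a'\mid Z,s,a)\,\big|\,U=u,S=s\big]\le\sup_z\sum_a\pi^b(a\mid u,s)\,\pi(a'\mid z,s,a)$, which after dividing by $\pi^b(a'\mid u,s)$ is at most $p_{\max}$ by its definition (valid since $\nu^\ast,\hat\nu^\ast\in\Omega$). This change of measure gives $\EE\big[\big|(q-\hat q)(W,\pi(S,Z,A),S)\big|\big]\le p_{\max}\,\EE\big[\big|(q-\hat q)(W,A,S)\big|\big]\le p_{\max}\zeta_n$. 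Combining, each of the two terms is at most $\xi_n+p_{\max}\zeta_n$, so $\text{Regret}(\hat\nu^\ast)\le 2(\xi_n+p_{\max}\zeta_n)$. I expect this last change of measure to be the main obstacle: it must be carried out under unmeasured confounding and after marginalizing out the proxy $Z$, and it is exactly what dictates the form of $p_{\max}$ (the supremum over $z$ and over $\nu\in\Omega$) and forces the use of the conditional-independence structure of the proximal model.
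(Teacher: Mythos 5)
Your proposal is correct and follows essentially the same route as the paper's proof: identify $\calV$ via the bridge function, exploit the greedy optimality of $\hat\nu^\ast$ with respect to $\hat\EE[\hat q\,\cdot]$ to reduce the regret to projection and bridge-estimation errors, and then bound the bridge part by a change of measure from the policy-induced action to the behavior action with importance ratio at most $p_{\max}$, followed by Cauchy--Schwarz to obtain $p_{\max}\zeta_n$. The only difference is that you spell out explicitly the conditional-independence facts ($W\indep(Z,A)\mid(S,U)$ and $A\indep Z\mid(S,U)$) that justify the importance-weighting identity, which the paper writes down directly without comment.
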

Lemma \ref{thm:regre_contextual} indicates that the error bound consists of two compoennts: the estimation error for the bridge function and the estimation error for the projection step.  Suppose $\hat{q}$ and the projection estimator are computed by the estimation procedures described in Section \ref{sec:est_contextual}. 
When $\calQ$ (the function space for $q$) and $\calG$ (the function space for test functions and the function space for the projected function) are VC-subgraph classes, we have the following theorem for the regret guarantee. Results when $\calG$ and $\calQ$ are reproducing kernel Hilbert spaces (RKHSs) are provided in  {Section S8.4}. 

\begin{theorem}
    \label{cor:regret-contextual}
    If the star-shaped spaces $\calG$ and $\calQ$ are VC-subgraph classes with VC dimensions $\mathbb{V}(\calG)$,  and $\mathbb{V}(\calQ)$ respectively. Under  assumptions in Theorems 
    S4 and S7 in the supplementary material, 
    with probability at least $1-\delta$,  
     \begin{align*}
       \text{Regret}(\hat v^\ast)  \lesssim n^{-1/2}p_{\max}  \sqrt{\log(1/\delta) + \max\left\{ \mathbb{V}(\calG),  \mathbb{V}(\calQ)\right\}},
    \end{align*}
    where for any two positive sequences $\{a_n\}_n$, $\{b_n\}_n$, $a_n\lesssim b_n$ means that there exists some universal constant $C>0$ such that $a_n\le C b_n$ for any $n$. 
\end{theorem}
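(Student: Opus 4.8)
The plan is to combine the regret decomposition of Lemma~\ref{thm:regre_contextual} with the finite-sample estimation-error bounds of Theorems~\ref{thm:minmax-contextual} and~\ref{thm:proj_error_contextual}, and then to specialize the resulting complexity terms to VC-subgraph classes. Since $q\in\calQ$, Lemma~\ref{thm:regre_contextual} already gives $\text{Regret}(\hat{\nu}^\ast)\le 2\xi_n+2p_{\max}\zeta_n$, so it suffices to bound $\zeta_n=\|q-\hat q\|_2$ and $\xi_n=\sup_{q\in\calQ,a\in\calA}\|g[\cdot,\cdot,\cdot\,;q(\cdot,\cdot,a)]-\hat g[\cdot,\cdot,\cdot\,;q(\cdot,\cdot,a)]\|_2$, each on an event of probability at least $1-\delta/2$, and then take a union bound.

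For $\zeta_n$, I would observe that $\hat q$ in~\eqref{eqn:qest-contextual} is exactly the penalized min--max conditional-moment estimator of \cite{dikkala2020minimax} applied to the integral equation~\eqref{eqn: linear integral equation contextual}. Theorem~\ref{thm:minmax-contextual} then bounds $\zeta_n$ by a constant multiple of the critical radii $\delta_{n,\calQ}$ and $\delta_{n,\calG}$ of the (star-shaped) classes $\calQ$ and $\calG$ plus a $\sqrt{\log(1/\delta)/n}$ deviation term; passing from the weak projected norm that the min--max objective controls directly to $\|\cdot\|_2$ is precisely what the realizability $q\in\calQ$ together with the ill-posedness/regularity hypotheses of that theorem buys us. For $\xi_n$, the estimator $\hat g$ in~\eqref{eqn:proj-contextual} is a penalized least-squares fit of the conditional expectation $g[\cdot,\cdot,\cdot\,;q(\cdot,\cdot,a)]$, so Theorem~\ref{thm:proj_error_contextual} bounds the error for each fixed target by $\delta_{n,\calG}$ plus $\sqrt{\log(1/\delta)/n}$. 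Taking the supremum over $q\in\calQ$ and $a\in\calA$ is needed because the plugged-in target $\hat q(\cdot,\cdot,a)$ is itself data-dependent and ranges over $\calQ$; this costs an additional metric-entropy term for $\calQ$ and is the reason $\mathbb{V}(\calQ)$ enters the projection bound as well.

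It then remains to evaluate the critical radii for VC-subgraph classes. A uniformly bounded, star-shaped VC-subgraph class $\calF$ of dimension $\mathbb{V}(\calF)$ has uniform entropy $\log N(\varepsilon,\calF,\|\cdot\|_{n})\lesssim \mathbb{V}(\calF)\log(1/\varepsilon)$; feeding this into Dudley's entropy integral bounds the localized empirical Rademacher complexity by a constant times $\delta\sqrt{\mathbb{V}(\calF)/n}$ (up to logarithmic factors), and solving the resulting fixed-point inequality gives $\delta_{n,\calF}\lesssim\sqrt{\mathbb{V}(\calF)/n}$. Writing $\mathbb{V}=\max\{\mathbb{V}(\calG),\mathbb{V}(\calQ)\}$ and substituting into the two bounds above, a union bound yields, with probability at least $1-\delta$,
\[
\zeta_n+\xi_n\ \lesssim\ \sqrt{\tfrac{\mathbb{V}+\log(1/\delta)}{n}},
\]
and hence $\text{Regret}(\hat{\nu}^\ast)\le 2\xi_n+2p_{\max}\zeta_n\lesssim p_{\max}\,n^{-1/2}\sqrt{\log(1/\delta)+\max\{\mathbb{V}(\calG),\mathbb{V}(\calQ)\}}$, which is the claimed bound.

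The genuine difficulty is not in this corollary but inside the two invoked theorems: Theorem~\ref{thm:minmax-contextual} must convert a small empirical conditional-moment violation into a true $\mathcal{L}_2$ bound on $q-\hat q$ even though the underlying inverse problem is ill-posed, and Theorem~\ref{thm:proj_error_contextual} must deliver its guarantee \emph{uniformly} over the random target $\hat q(\cdot,\cdot,a)\in\calQ$. Given those, the only nontrivial remaining ingredient here is the chaining/localization argument reducing the critical radii of $\calG$ and $\calQ$ to their VC dimensions, plus careful bookkeeping of the two failure probabilities.
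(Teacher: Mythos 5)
Your proposal is correct and follows essentially the same route as the paper: it combines the regret decomposition of Lemma~\ref{thm:regre_contextual} with the bounds on $\zeta_n$ and $\xi_n$ from Theorems~\ref{thm:minmax-contextual} and~\ref{thm:proj_error_contextual}, and then controls the critical radii of the VC-subgraph classes $\calG$ and $\calQ$ exactly as the paper does via Lemma~\ref{lem:VC-contextual}. The only difference is presentational (your explicit union bound and Dudley-entropy discussion spell out what the paper leaves implicit), so no gap to report.
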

Theorem \ref{cor:regret-contextual} provides the finite-sample regret bound for the super-policy learning algorithm  under the setting of confounded contextual bandits. The bound is determind by the sample size $n$, the overlap quantity $p_{\max}$ and function spaces \(\mathcal{Q}\) and \(\mathcal{G}\). Suppose $p_{\max}$ is bounded by a constant and the VC dimensions are $K$, then the derived regret bound achieves the rate $\sqrt{K/n}\log n$. \rev{In \cite{athey2021policy}, they derived  the regret bound of the common policy assuming there is no unmeasured confounders. Our regret bound matches their upper bound ($\sqrt{1/n}$) regarding the sample size up to a $\log$ order. }
\rev{In practice, there are different choices of VC-subgraph classes. For instance, if one assumes the linear model of $\calQ$, i.e,  $\calQ = \{q: q(s,a) = \sum_{j=1}^d \alpha_j b_j(s,a), \alpha_j \in \mathbb{R}, j =1,\dots, d \}$, where $b_j(\cdot, \cdot), j=1,\dots, d$ are pre-specified basis functions. Then we have $\mathbb{V}(\calQ)\leq d+1$. }

{\section{Simulations} \label{sec:simu}}
{\subsection{Simulation Study for Contextual Bandits}\label{sec:simu_contextual}}

{In this section, we conduct two simulation studies to evaluate the performance of the proposed super-policy. The first one is a contextual bandit example with discrete feature values. We aim to demonstrate the super-policy performs better when the behavior policy reveals more information about the unmeasured confounders. The second one is a contextual bandit example with a continuous state space. It is used to demonstrate the performance of our algorithm using the bridge function.}

\medskip

\noindent {\textbf{A contextual bandit example with discrete feature values:}}
{ Similar to the toy example described in Section \ref{sec:contextual}, we take $S$ and $U$ as independent binary variables such that $\Pr(S=1) = 0.5$ and $\Pr(U=1) = 0.5$. The binary action $A$ is generated by the following conditional probabilities
\[
   \Pr(A = 1 \mid U = 0) = \epsilon,  \qquad 
   \Pr(A = 1 \mid U = 1) = 1- \epsilon,
\]
with different choices of $\epsilon \in [0,1]$. The larger the $|\epsilon - 0.5|$ is, the more information of $U$ is revealed in the observed action $A$.  Both the reward proxy $W$ and the action proxy $Z$ are binary and are generated according to the following conditional probabilities
\[
   \Pr(W = 1 \mid U = 0) = 0.4,  \qquad 
   \Pr(W = 1 \mid U = 1) = 0.6; 
\]
\[
   \Pr(Z = 1 \mid U = 0) = 0.4,  \qquad 
   \Pr(Z = 1 \mid U = 1) = 0.6.  
\]
Moreover, $W$ and $Z$ are conditionally independent given $U$.  The observed reward is computed by $R = (U- 0.5)  (A-0.5) + \epsilon$ where $\epsilon \sim N(0, 0.5)$.}

{Three types of policy classes are considered. 
\begin{enumerate}
    \item \sonly{}: $\mathcal{S} \rightarrow \mathcal{P}(\mathcal{A})$. The policy only depends on the observed state $S$. 
     \item \szonly{}: $\mathcal{S} \times \mathcal{Z} \rightarrow \mathcal{P}(\mathcal{A})$. The policy depends on on the observed state $S$ and the action proxy $Z$. 
      \item \super{}: $\mathcal{S} \times \mathcal{Z} \times \mathcal{A} \rightarrow \mathcal{P}(\mathcal{A})$. The super-policy class where the policy depends on the observed state $S$, the action proxy $Z_t$, and observed action $A$. 
\end{enumerate}
We implement Algorithm \ref{alg:contextual2} to estimate the corresponding optimal policies for different policy classes. Note that for $\sonly{}$ and $\szonly{}$, we perform the projection step (line 4) by conditioning on $S$ and $(S,Z)$ respectively.  Since the feauture values are discrete, we use the empirical averages to approximate all the conditional expectations. In this simulation study, we consider the sample size $n = 5000$.
As Table \ref{tab:tabular_contextual} shows, the super-policy produces smallest regret as $\epsilon$ deviates from $0.5$ more, while the estimated optimal policies such as $\sonly{}$ and $\szonly{}$ do not change and have larger regrets. }

\begin{table}[t]
\caption{{Simulation results for  the discrete feature values setting described in \ref{sec:simu_contextual} under different choices of $\epsilon$. We replicate the simulation for $50$ times. Mean regret values for estimated optimal policies under different policy classes are provided (and a smaller regret value indicates a better performance).  Values in the parentheses are the standard deviations of the regret values. }}
\label{tab:tabular_contextual}
\begin{center}
\begin{tabular}{| l| lll|}
\hline
&\bf \sonly{}   &\bf \szonly{}   &\bf \super{} \\ \hline 
$\epsilon = 0.5$ &  0.25 (3.1e-04)& \textbf{0.21} (1.7e-02)& \textbf{0.21} (1.4e-02) \\ \hline
$\epsilon = 0.7$ &  0.25 (3.1e-04)& 0.22 (1.8e-02)& \textbf{0.18} (3.5e-02) \\ \hline
$\epsilon = 0.9$ &  0.25 (2.5e-04)& 0.24 (1.2e-02)&  \textbf{0.17} (8.6e-02) \\ \hline
\end{tabular}
\end{center}
\end{table}

\medskip
\noindent {\textbf{A contextual bandit with a continuous state:}}
{ In this setting, we take $S$ and $U$ as independent Gaussian random variables such that $S \sim N(0,1)$ and $U \sim N(0,1)$. The binary action $A$ is  generated by the following conditional probabilities
\[
   \Pr(A = 1 \mid U > 0) = \epsilon,  \qquad 
   \Pr(A = 1 \mid U \leq 0 ) = 1- \epsilon,
\]
with different choices of $\epsilon \in [0,1]$. Again, the larger the $|\epsilon - 0.5|$ is, the more information of $U$ is revealed in the observed action $A$.  We generate $W$ and $Z$ according to the following conditional probabilities
\[
   W\mid (S, U) \sim N( S + 3U,  1);
\]
\[
   Z\mid (S, U)  \sim N( 3S + U,  1).
\]
Moreover, $W$ and $Z$ are conditionally independent given $(S, U)$.  The observed reward is computed by $R = U  (A-0.5) + \epsilon$ where $\epsilon \sim N(0, 0.5)$.
For this continuous setting, we compute the $Q$-bridge function via the min-max conditional moment estimation described in Section \ref{sec:est_contextual} by taking $\mathcal{G}$, $\mathcal{Q}$ as reproducing kernel Hilbert Spaces (RKHSs) equipped with Gaussian kernels. The bandwidths of Gaussian kernels are selected by the median heuristic.  Tuning parameters of the penalties are selected by cross-validation. Computation details can be found in Section E of \cite{dikkala2020minimax}. As for the projection step, we adopt 
the linear regression to perform the estimation. 
In this simulation study, we take the sample size $n = 1000$.} 

{Table \ref{tab:continous_contextual} shows the simulation results over $50$ replications. The observation is consistent with that in the discrete feature values setting.  The super-policy clearly outperforms the other two commonly used optimal policies when $\epsilon$ deviates from $0.5$.}

\begin{table}[t]
\caption{{Simulation results for  the continuous setting described in \ref{sec:simu_contextual} under different choices of $\epsilon$. The simulation is performed over $50$ simulated datasets. Mean regret values for estimated optimal policies using different policy classes are provided. Smaller regret values indicate better performance.  Values in the parentheses are the standard deviations.}} 
\label{tab:continous_contextual}
\vspace{-0.3cm}
\begin{center}
\begin{tabular}{|l | lll|}
\hline
&\bf \sonly{}   &\bf \szonly{}   &\bf \super{} \\ \hline 
$\epsilon = 0.5$ &  0.40 (2.32e-03)& \textbf{0.11} (1.80e-03)& \textbf{0.11} (1.78e-03) \\ \hline
$\epsilon = 0.7$ &  0.40 (2.35e-03)& 0.12 (2.08e-03)& \textbf{0.10} (2.67e-03) \\ \hline
$\epsilon = 0.9$ &  0.40 (2.03e-03)& 0.12 (5.29e-02) & \textbf{0.06} (6.32e-03) \\ \hline
\end{tabular}
\end{center}
\vspace{-0.6cm}
\end{table}

{\subsection{A Simulation Study for Sequential Decision Making}\label{sec:simu_fh}}

In this section, we perform a simulation study to evaluate the performance of the super-policy in the sequential decision making. Specifically, we follow the data generation process described in Section F.1 of \cite{Miao2022}. Mainly, our $O_t$ corresponds to their $S_t$ for $t = 1,\dots, T$ and $O_0$ corresponds to their $Z_1$.  Other variables match exactly with their notations,  and we only change the reward function to $R_t = \mathrm{expit}\{U_t(A_t - 0.5)\} + e_t$, where $e_t\sim \mathrm{Uniform}[-0.1,0.1]$ and $ \mathrm{expit}(x) = 1/(1+\exp(-x))$. We take the sample size as $n = 2000$ and the length of episode $T = 2$.  Note that this setting satisfies the memoryless assumption (i.e., Assumption \ref{ass: memoryless}). We implement Algorithm \ref{alg:fh2} to estimate the optimal super policy (\super{}), and compare it with the common policy (\common{}) where the policy depends on observations $O_{1:t}$ and history actions $A_{1:t-1}$.   
We again use the RKHS to perform the min-max conditional moment estimation for obtaining a sequence of $Q$-bridge functions  and implement a linear regression to estimate the projections at every iteration. See implementation details in the discussion of the continuous setting in Section \ref{sec:simu_contextual}. To obtain the regret value, we estimate the optimal policy which depends on unobserved state variables $U_{1:t}$ and observed state variables $O_{1:t}$,  and use it to approximate  the oracle optimal value. \rev{In our current sequential decision making framework, we require $T$ to be relatively small as the policy and $Q$-bridge functions depends on all the historical observations and actions produced by both behavior policy and super policy.  This makes computation very challenging when $T$ becomes larger.  To address this challenge and enable practical implementation of the super policy in long-horizon sequential decision-making scenarios, by leveraging the episodic POMDP structure in \cite{rui2022off}, we simplified the $Q$ functiona and let the super policy depend on only the current observation $O_t$, action proxy $Z_t$ and action produced by the behavior policy $A_t$. Following the identification  and estimation framework detailed in {Section S11}, 
we conduct the simulation for $T=10$ and $T=20$ with $n=2000$ under the same data generating process described above. Detailed description and additional simulation results can be found in {Section S11.4}. }
Table \ref{tab:fh} summarises the simulation results over $50$ simulated datasets. As we can see, the super policy performs  significantly better than the common policy.

\begin{table}[t]
\caption{{Simulation results for the sequential decision making problem described in \ref{sec:simu_fh}. 
Mean regret values over $50$ simulated datasets  for estimated optimal policies under different policy classes are provided. The smaller regret values indicate better performances.  Values in the parentheses are the standard deviations of the regret values. }}
\label{tab:fh}
\vspace{-0.3cm}
\begin{center}
\begin{tabular}{ |l|ll|}
\hline
 & \bf \common{}    &  \bf \super{} \\ \hline 
 $T=2$ & 9.65e-02 (3.33e-03)&  \textbf{7.91e-02} (3.77e-03) \\
 \hline 
 $T=10$ &  1.73 (2.61e-01)& \textbf{0.697} (1.51e-01) \\
 \hline 
 $T=20$ & 5.68 (3.27e-01)& \textbf{3.54} (2.81e-01) \\
  \hline
\end{tabular}
\vspace{-0.6cm}
\end{center}
\end{table}

\medskip

\section{Real Data Applications}

{In this section, we evaluate the performance of our method on the dataset from a cohort study of patients with deteriorating health
who were referred for assessment for intensive care unit (ICU) admission in 48 UK National Health Service
(NHS) hospitals in 2010-2011 \citep{harris2015delay}. The data can be obtained from \cite{keele2020stronger}.  Our goal is to find an optimal policy on whether recommending the patients for admission that maximizes 7-day survival rates.}

{This application corresponds to the contextual bandits problem. In the dataset, there are 13011 patients, of whom 4934 were recommend to be admitted to ICU by doctors ($A=1$) and the remaining were not ($A=0$). If a patient survived or censored at day 7, we let the response $Y=100$, otherwise, we take the response as $Y = 0$.
We include patients' age, sex, and sequential organ failure assessment score (sofa\_score) as baseline covariates. Usually the number of open beds in ICU may limit the real 
admissions of patiens and therefore affect the survival of patients, we also include the number of open beds in ICU in the baseline covariates. 
For the remaining measurements, 
following the idea in Section 6.1 in \cite{tchetgen2020introduction} for selecting proxy variables, we look at variables that are strongly correlated with the treatment and the outcome. As a results, we take the National
Health Service national early warning score (news\_score) as the action proxy $Z$ and the indicator of periarrest as the reward proxy $W$.
}

{We compare the super-policy with the two common policies \sonly{} \szonly{} described in Section \ref{sec:simu_contextual}. To make it more comparable, we use the same estimating procedure for the bridge functions considered in these three methods.  In addition, the RKHS modeling for the min-max conditional moment estimation is taken to obtain the $Q$-bridge function. See details of the RKHS modeling  in the continuous setting in Section \ref{sec:simu_contextual}. \revv{
In the implementation, we utilize existing code from the repository \url{https://github.com/rui-miao/ProxITR} to obtain the estimated $Q$-bridge functions. Specifically, we take the function \texttt{ApproxRKHSIVCV} with the argument \texttt{n\_components = 15}, while keeping all other input arguments set to their default values.
}   We use the  linear regression to obtain the projection (line 4) in Algorithm \ref{alg:contextual2}.
} 

{To evaluate the value by different policies, we randomly separate $40\%$ of the data and use it as the evaluation set $\mathcal{E}$. More specifically, after obtaining the estimated optimal policies using $60\%$ of the data, we perform the policy evaluation of these three estimated optimal policies using the remaining $40\%$ of the data. Take $\hat q$ as the estimated bridge function using whole data. The evaluation is conducted as follows. 
$\calV(\nu) = \hat \EE \{\sum_{a \in \calA} \hat q(W, S , a) \nu(a \mid S, Z, A)\}$, for $ \nu \in \super{}$; $\calV(\pi) = \hat \EE \{\sum_{a \in \calA} \hat q(W, S , a) \pi(a \mid S, Z)\}$, for $ \pi \in \szonly{}$; $\calV(\pi) = \hat \EE \{\sum_{a \in \calA} \hat q(W, S , a) \nu(a \mid S)\}$, for $ \pi \in \sonly{}$, where the expectation $\hat \EE$ refers to the average with respect to the evaluation set $\mathcal{E}$.}
{Table \ref{tab:rhc} shows the evaluation results over $20$ random splits. As we can see, the super-policy produces higher evaluated policy values compared with the other two methods. }
\begin{table}[h]
\caption{{Evaluation results of the optimal policies learned from three different policy classes using ICU admission data. The averages of evaluation values over 20 random splits are presented. 
Larger values indicate better performances. Values in the parentheses are standard errors. }}
\label{tab:rhc}
\vspace{-0.3cm}
\begin{center}
\begin{tabular}{|lll|}
\hline 
\bf \sonly{}   &\bf \szonly{}   &\bf \super{} \\ \hline 
  88.18 (0.351)&  88.10 (0.277)& \textbf{88.70}  (0.266)\\ \hline
\end{tabular}
\vspace{-0.6cm}
\end{center}
\end{table}

\revv{Besides the application to ICU admission data, we also use the Multi-parameter Intelligent Monitoring in Intensive Care (MIMIC-III) dataset (\url{https://physionet.org/content/ mimiciii/1.4/}) as a sequential example to demonstrate the performance of estimated optimal policies from two policy classes (\common{} and \super{}). 
We consider two different horizons $T=2$ and $T=10$.  As the results show, two policies show similar performance in both settings, 
 which implies that the behavior actions may not contain additional information of unobserved state variables. As a futher investigation, we performed a random forest classification model to check if expert's  actions can be fully explained by the observed state variables. The accuracy on the test data set is over 99\%, which further supports the claim that the behavior actions do not contain additional information of unobserved state variables. 
More details can be found in {Section S9}. }

\section{Conclusion}
In this paper, we propose a super policy learning framework using offline data  for confounded environments. With the hope that actions taken by past agents may contain valuable insights into undisclosed information, we include the actions produced by the behavior agent as input for the decison making so as to achieve a better oracle. Built upon the idea of the proximal causal inference, we  develop several novel identification results for super policies under different settings including contextual bandits and sequential decision making. In particular, for the sequential decision making, we provide two distinct identification results, using either the $Q$-bridge functions and $V$-bridge functions, respectively.  Based on these results, we then introduce new policy learning algorithms for estimating the super policy, and we conduct an analysis of finite-sample regret bounds for these algorithms.   A series  of numerical experiments show the appealing performance of our proposed framework and  highlight its superiority over common policies that only rely on observed features.

We list several directions for the future work. First, an efficient algorithm for sequential policy learning using $V$-brdige functions will be particularly useful. Comparing to the approach using $Q$-bridge functions,  it does not require the reward proxy variables $W_t$. Second, it is of great interest to extend the idea of super policy learning to other identification frameworks with unmeasured confounders. Currently, we borrow the idea form proximal causal inference to establish our identification results. Some extensions could be investigated by using the instrumental variables \citep{Fu2022} or mediators \citep{shi2022off}.

\section*{Acknowledgments}
 The authors thank the reviewers for their helpful comments and suggestions. Portions of this research were
conducted with the computing resources provided by
UTD High Performance Computing (HPC).  The work of Jiayi Wang is partly supported by the National Science Foundation (DMS-2401272) and Texas Artificial Intelligence Research Institute (TAIRI). 
The authors report there are no competing interests to declare.

\section*{Supplementary Material}
The supplementary material, including additional theoretical results, proofs, and implementation details, is available for download at \url{https://www.tandfonline.com/doi/suppl/10.1080/01621459.2025.2574706?scroll=top}.



\bibliographystyle{chicago}
\spacingset{0.85}
\bibliography{reference}
\end{document}